\documentclass[a4paper]{article}
\setlength\parskip{\baselineskip}
\usepackage[a4paper,margin=2cm]{geometry}

\usepackage[round]{natbib}

\usepackage[utf8]{inputenc} 
\usepackage[T1]{fontenc}    
\usepackage{hyperref}       
\usepackage{url}            
\usepackage{booktabs}       
\usepackage{amsfonts}       
\usepackage{nicefrac}       
\usepackage{microtype}      
\usepackage{enumitem}
\setenumerate[1]{label=(\arabic*)}

\usepackage{subfigure}
\usepackage{wrapfig}

\usepackage{graphicx}
\hypersetup{colorlinks=true, linkcolor=magenta, citecolor=blue}
\usepackage[ruled]{algorithm2e} 
\usepackage{amsmath,amsthm,amssymb}
\usepackage{bm}
\usepackage{color}
\newcommand{\refalg}[1]{Algorithm~\ref{alg:#1}}
\newcommand{\refeq}[1]{\eqref{eq:#1}}

\newcommand{\refsupp}[1]{Supplementary Appendix~\ref{sec:#1}}
\newcommand{\refsec}[1]{\S~\ref{sec:#1}}
\newcommand{\refprop}[1]{Proposition~\ref{prop:#1}}

\newcommand{\refthm}[1]{Theorem~\ref{thm:#1}}
\newcommand{\reflem}[1]{Lemma~\ref{lem:#1}}

\newtheorem{thm}{Theorem}[section]
\newtheorem{lem}[thm]{Lemma}
\newtheorem{prop}[thm]{Proposition}
\newtheorem{dfn}[thm]{Definition}

\newcommand{\EE}{\mathbb{E}}
\newcommand{\dd}{\mathrm{d}}
\newcommand{\argmax}{\mathop{\rm arg~max}\limits}
\newcommand{\argmin}{\mathop{\rm arg~min}\limits}

\newcommand{\wdot}{\,\cdot\,}
\newcommand{\hull}{\mathsf{span}}
\newcommand{\conv}{\mathsf{conv}}
\newcommand{\iprod}[1]{\langle#1\rangle}

\newcommand{\diam}{\mathsf{diam}}

\newcommand{\mmd}{\mathsf{MMD}}


\newcommand{\PP}{\mathbb{P}}
\newcommand{\QQ}{\mathbb{Q}}

\newcommand{\CC}{\mathbb{C}}
\newcommand{\RR}{\mathbb{R}}
\newcommand{\NN}{\mathbb{N}}

\newcommand{\F}{\mathcal{F}}

\newcommand{\G}{\mathcal{G}}
\renewcommand{\H}{\mathcal{H}}
\newcommand{\M}{\mathcal{M}}

\newcommand{\B}{\mathcal{B}} 
\newcommand{\V}{\mathcal{V}}
\newcommand{\X}{\mathcal{X}}
\newcommand{\Z}{\mathcal{Z}}

\newcommand{\rad}{\mathfrak{R}}
\newcommand{\radhat}{\widehat{\mathfrak{R}}}
\newcommand{\gss}{\mathfrak{G}}
\newcommand{\gsshat}{\widehat{\mathfrak{G}}}

\newcommand{\dom}{\mathcal{D}}
\newcommand{\err}{\mathcal{E}}

\newcommand{\muhat}{\widehat{\mu}}

\newcommand{\Lhat}{\widehat{L}}

\newcommand{\fhat}{\widehat{f}}

\newcommand{\dagphi}{\varphi^\dag}


\title{Fast Approximation and Estimation Bounds\\of Kernel Quadrature for Infinitely Wide Models}

\author{%
  Sho Sonoda \\
  RIKEN AIP\\
  Tokyo 103--0027, Japan\\
  \texttt{sho.sonoda@riken.jp}
}

\date{June 11, 2020}

\begin{document}

\maketitle

\begin{abstract}
An infinitely wide model is a weighted integration $\int \varphi(x,v) \dd \mu(v)$ of feature maps. This model excels at handling an infinite number of features, and thus it has been adopted to the theoretical study of deep learning. Kernel quadrature is a kernel-based numerical integration scheme developed for fast approximation of expectations $\int f(x) \dd p(x)$. In this study, with regarding the weight $\mu$ as a signed (or complex/vector-valued) distribution of parameters, we develop the general kernel quadrature (GKQ) for parameter distributions. The proposed method can achieve a fast approximation rate $O(e^{-p})$ with parameter number $p$, which is faster than the traditional Barron's rate, and a fast estimation rate $\widetilde{O}(1/n)$ with sample size $n$. As a result, we have obtained a new norm-based complexity measure for infinitely wide models. Since the GKQ implicitly conducts the empirical risk minimization, we can understand that the complexity measure also reflects the generalization performance in the gradient learning setup.
\end{abstract}

{
\newcommand{\yy}{\mathbf{y}}
\newcommand{\xx}{\mathbf{x}}
\newcommand{\ww}{\mathbf{w}}
\newcommand{\vv}{\mathbf{v}}
\renewcommand{\SS}{\mathbf{S}}
\newcommand{\xxi}{\bm{\xi}}
\newcommand{\ddelta}{\bm{\delta}}
\section{Introduction} \label{sec:intro}

In this study, we consider the finite approximation problem of an infinitely wide model. This model covers a variety of modern machine learning methods, for example, such as random features \citep{Rahimi2008,Rahimi2009}, ensemble learning \citep{Schapire2012}, Bayesian neural networks \citep{Neal1996}, and kernel machines \citep{Scholkopf2001}. Recently, it has also been employed in the theoretical study of deep learning \citep{Nitanda2017,Rotskoff2018,Mei2018,Sirignano2020,Chizat2018,Jacot2018,Frankle2019,Belkin2019a,Hastie2019}. Despite a large number of parameters, called the \emph{over-parametrized regime}, deep learning achieves high generalization performance. Providing a better complexity measure for estimating the generalization performance of over-parametrized models are a recent trend in deep learning study.

The integral representation $S[\mu](x) = \int \varphi(x;v)\dd \mu(v)$ is a handy expression for infinitely wide models. Compared to a finite model $g(x;\theta_p) = \sum_{j=1}^p w_j \varphi(x;v_j)$, a measure $\mu$ ``indexed by $v$'' is an infinite alternative of a finite set of parameters $(w_j,v_j)$ ``indexed by $j$''. We call the measure $\mu$ a \emph{parameter distribution}. We remark (1) that we do not need to restrict $\mu$ to be a probability density, and we can assume $\mu$ to be either signed/complex/vector-valued measures; (2) that we can reproduce a finite model $g(x;\theta_p)$ by using Dirac measures $\delta_v$ and letting $\mu_p = \sum_{j=1}^p w_j \delta_{v_j}$;
and thus, (3) that we do not need to pre-fix the parameter number $p$ before/during training.
Thus, the integral representation is \emph{not} a counterpart of the finite models, but it is an extension of the finite models.

Through the finite approximation problem, we aim to investigate a new complexity measure for infinite models. One way to measure the complexity of an infinite model is to construct a finite approximation $g(x;\theta_p) = \sum_{j=1}^p w_j \varphi(x;v_j)$ and count the number $p$ of nodes required for the approximation. The idea of finite approximation is not new but has a long history. For example, the \emph{smoothness} of a function reflects the complexity in the approximation by algebraic polynomials, and the \emph{decay property} in the frequency domain reflects the complexity in the approximation by trigonometric polynomials. These complexities are derived based on a fixed basis.

One specific goal of this study is the so-called \emph{Barron's rate} \citep{Barron1993,Kurkova2012}, a traditional approximation rate $O(1/\sqrt{p})$. Although this is the same rate as Monte Carlo methods, it has a strong advantage that it does not depend on the input dimension. In contrast, the so-called \emph{Jackson's rate} \citep{DeVore1993}, another more historical rate $O(p^{-s/m})$, depends on the dimension $m$ and smoothness $s$, which results in the \emph{curse of dimensionality}. The difference stems from that Barron's theory assumes an adaptive basis, whereas Jackson's theory assumes a fixed basis.
Since neural networks have an adaptive basis, we consider in the framework of Barron's theory.
In this study, we apply the concept of kernel quadrature to construct the finite approximation, which is also based on the strategy of adaptive node selection.

Kernel quadrature is a kernel-based numerical integration method developed for computing expectations $\int f(x) \dd P(x)$ \citep{Briol2015a}. It is closely related to quasi-Monte Carlo \citep{Rubinstein2016}, and thus known to converge faster than the ordinary Monte Carlo.
Between integral transforms and expectations, there are at least three gaps: First, $\mu$ is no more a probability measure but a vector-valued measure. Second, the approximant is no more a constant but a function. Finally, $\mu$ parametrizes a function $S[\mu]$. To address these gaps, we reconstruct the \emph{KME for vector-valued measures}, and establish the \emph{KQ in a function space}. Through the reconstruction, we expect to find an appropriate complexity. Fortunately, under mild conditions, we have shown that our proposed kernel quadrature method implicitly minimizes the empirical risk. Therefore, the proposed complexity also estimates the generalization error of infinite models learned by gradient descent.

One of the technical difficulties in handling parameter distributions is that \emph{the correspondence between a parameter distribution $\mu$ and the function $f = S[\mu]$ is not always 1-to-1.}
In other words, the map $S : \mu \mapsto f$ is not always injective. In particular, it is usually true for neural networks. (See \refsupp{eg.injective} for more detail.)
\citet{Bach2017b} and \citet{Chizat2018} employed the total variation norm and Wasserstein distance for parameter distributions.
However, they are too strong to handle parameter distributions for neural networks,
because if two different distributions $\mu$ and $\nu$ indicates the same function: $S[\mu] = S[\nu]$,
then both distances unnecessarily distinguish distributions: $d( \mu, \nu ) > 0$, which further results in another non-convexity of the training.
(Here, we note that they introduced sparse regularizations to convexify the training problem.)
Obviously, it is preferable to identify $\mu$ and $\nu$ if $S[\mu] = S[\nu]$. In other words, rather than the raw space $\M$ of measures, we should work on the quotient space $\M / \ker S$.
To address this issue, we come to an idea of the \emph{unitary kernel} that can vanish the null space of $S$ and induce a natural geometry to the parameter space.

\paragraph{Contributions.}
The main contributions of this study are in (1) investigating an appropriate metric (KME and MMD) for parameter distributions, (2) proposing a few approximation algorithms (GKQ, EKQ and UKQ), and (3) providing fast approximation and estimation error bounds ($O(e^{-p})$ and $O(1/n)$ respectively) and corresponding complexity measure $\| K[\mu] \|_K$.

\paragraph{Related Works.}
Thus far, many authors attempted to estimate the parameter distribution $\mu$.
We note that from the view point of linear algebra, or functional analysis, 
training a single layer model $S[\mu]$ is equivalent to estimating a pseudo-inverse operator $S^\dag[f]$.
\emph{Convex neural networks} \citep{LeRoux2006,LeRoux2006b,Bach2017a} aimed to conduct conditional gradient in practice, and \citet{Bach2017b} pointed out that it is revealed to be ``intractable in practice''. The intractability stems from a non-convex subproblem in the optimization algorithm.
\emph{Wasserstein gradient flow} \citep{Nitanda2017,Chizat2018} also aimed to  estimate parameter distributions by introducing the Wasserstein metric on the space of parameter distributions. The ideas are natural, but as we describe later, the convergence may be too strong for parameter distributions.
In addition, parameter numbers $p$ should be fixed beforehand.
\citet{Sonoda2014} proposed a \emph{numerical integration} method for approximating parameter distributions. They computed the \emph{ridgelet transform} $R$ \citep{Murata1996,Candes.HA,Sonoda2017}, or an explicit expression of the pseudo-inverse operator $S^\dag$ for neural networks, and proposed a simple Monte-Carlo sampling method for training shallow neural networks. By the uniform law of large numbers, their method can  converge at $O(1/\sqrt{p})$ in $L^\infty$ (see \refthm{ulln.sir}). 

\paragraph{Notation.}
$\overline{z}$ denotes the complex conjugate of a complex number $z$.
$\log_+(x)$ denotes the truncated logarithm $\max\{ 1, \log x\}$.
$|\mu|$ denotes the total variation measure of a real/complex/vector measure $\mu$.
$\mu[f]$ denotes the integration $\int f \dd \mu$ of a function $f$ by a measure $\mu$.
$\PP[f]$ denotes the expectation $\EE_{X \sim \PP}[f(X)]$ of a function $f$ by a probability distribution $\PP$.
$N(\mu,\sigma^2)$ denotes the normal distribution with mean $\mu$ and variance $\sigma^2$.
}

\section{Setup}

\paragraph{Learning model.}
Let $\X \subset \RR^m$ be the input data space.
We fix a data distribution $\PP$ on $\X$, and assume that it has the full support over $\X$.
By $L^2(\PP)$, we denote the Hilbert space of complex-valued $L^2$-functions on $\X$ with base measure $\dd \PP$.

Let $\V \subset \RR^d$ be the parameter space.
By $\M$ we denote the space of all complex Borel measures $\mu$ on $\V$.
We call an element $\mu \in \M$ a \emph{parameter distribution}.

Let $\varphi : \X \times \V \to \CC$ be the Borel measurable feature map.
To the end of this study, we consider a shallow model with expression
\begin{align}
    S[\mu] := \int_{\V} \varphi( \wdot ;v) \dd \mu(v), \quad \mu \in \M.
\end{align}
Let $\F$ be a Banach (or Hilbert) space on $\X$ with norm $\| \cdot \|_\F$ (or inner product $\iprod{\cdot, \cdot}_\F$).
We consider a subclass $\M_\F$ where $\| S[\mu] \|_\F < \infty$ for any $\mu \in \M_\F$.

We remark that both the complex and Borel assumptions on parameter distributions $\M$ are natural for our study
because $\mu \in \M$ will be identified as, for example, a Fourier spectrum, which is typically complex-valued;
and a finite sum of Dirac measures, which is not a function but a Borel measure.
In the supplementary materials, we consider a more general case where $\mu$ is vector-valued, but in the following, for the sake of simplicity, we only consider the special case where $\mu$ is complex-valued. 

\paragraph{Learning problem.}
Let $D_n := \{ (x_i, y_i) \}_{i=1}^n \subset \X \times \CC$ be a dataset of $n$ examples.
We assume that $x_i \sim \PP$ are i.i.d., and $y_i$ are obtained by the regression model $y_i = S[\mu^o](x_i) + \xi_i$ with an unknown parameter distribution $\mu^o \in \M$ and i.i.d. observation noise $\xi_i \sim N(0,\sigma^2)$.

Given a dataset $D_n$, the ultimate goal of supervised learning is to give a decision rule $A : D_n \mapsto f \in \F$ that minimizes the generalization error, which is often given by an expected risk $L[f] := \EE_{X,Y}[ \ell( Y, f(X) ) ]$ with a certain loss function $\ell : \CC \times \CC \to [0,\infty)$.
In this study, we investigate a kernel quadrature approach to estimate $\mu$. 

\paragraph{Auxiliary metrics.}
One of the aims in this study is to investigate appropriate metrics for parameter distributions. We consider the \emph{total variation norm} $\| \mu \|_{TV} := \int_\V \dd |\mu|(v)$ and the \emph{induced norm} $\| \mu \|_\F := \| S[\mu] \|_\F$ for reference purposes. We write $\M_{TV} := \{ \mu \in \M \mid \| \mu \|_{TV} < \infty \}$ and $\M_\F := \{ \mu \in \M \mid \| S[\mu] \|_\F < \infty \}$ respectively.

The metric of parameter distributions determines the regularity of model $S$.
The induced norm is a natural (weaker) norm because $S : \M_\F \to \F$ is always continuous: $\| S[\mu] \|_\F = \| \mu \|_\F$.
In contrast, we need more assumptions for $S : \M_{TV} \to \F$ to be continuous.
For example, assume that the feature map $\varphi$ satisfies the regularity condition $C_{TV \to \F} := \sup_{v \in \V} \| \varphi(\wdot; v) \|_\F < \infty$. Then, $S : \M_{TV} \to \F$ is a bounded (i.e., Lipschitz continuous) linear operator because $\| S[\mu] \|_\F \le C_{TV \to \F} \| \mu \|_{TV}$.

The TV norm is related to the sparsity.
If $\mu$ has a density function $\varrho = \dd \mu / \dd v$, or the Radon–Nikodým derivative, then the total variation norm is reduced to the $L^1$-norm: $\| \mu \|_{TV} = \| \varrho \|_{L^1}$; and if $\mu$ is a sum of singular measures such as $\sum_{j=1}^p w_j \delta_{v_j}$, then the total variation norm is reduced to the $\ell^1$-norm: $\| \mu \|_{TV} = \sum_{j=1}^p |w_j|$.
In many cases, the TV norm is stronger than other topology. For example, the law of large numbers and the central limit theorem cannot hold in TV.

\section{General Kernel Quadrature}
We introduce the kernel mean embedding (KME) and general kernel quadrature (GKQ) for parameter distributions $\mu \in \M$, to construct a finite model $S[\mu_p](x) = \sum_{j=1}^p w_j \varphi(x;v_j)$ that approximates an infinite model $S[\mu]$ at a faster rate than the traditional Barron's rate $O(1/\sqrt{p})$.

\subsection{Quick Review of Kernel Quadrature for Probability Distributions}
Kernel quadrature (KQ) is a fast Monte-Carlo (MC) integration scheme that is developed for approximating an integral $I = \int_{\X} f(x) \dd \PP(x)$ by a finite sum $I_t = \sum_{s=1}^t w_s f(x_s)$ with weight $w_s \in \RR$ and samples $x_s \sim \PP$. Whereas the ordinary MC integration schemes converge at $O_p(1/\sqrt{t})$ by drawing samples $x_s$ in an independent manner, the KQ methods typically converge at a faster rate such as $O_p(1/t)$ by drawing samples $x_s$ in an incremental (i.e., dependent) manner.

In brief, the final output of KQ is an empirical measure $\PP_t = \sum_{s=1}^t w_s \delta_{x_s}$ that approximates $\PP$ in the maximum mean discrepancy (MMD) metric. The MMD is a kernel-based metric for probability distributions, which we will explain later. The approximation problem is formulated as a convex optimization problem that minimizes the MMD between $\PP_t$ and $\PP$ subject to $\PP_t$, and solved by the conditional gradient (CG) method. Since CG is an incremental method, KQ generates $t$-term approximator $\PP_t$ after $t$ iterations.

\paragraph{KME and MMD.}
Let $k : \X \times \X \to \RR$ be a measurable positive definite kernel function, and let $H$ be its associated reproducing kernel Hilbert space (RKHS).
To begin with, we define the kernel mean embedding (KME) of $\PP$ by $k[\PP](x) := \int_{\X} k(x',x) \dd \PP(x')$.
If $\PP$ satisfies the \emph{kernel moment condition}, $\int_{\X} \| k( \cdot, x) \|_k \dd \PP(x) < \infty$, then the image $k[\PP]$ belongs to $H$ and thus KME is an embedding. To be precise, $k$ is a mapping (an embedding) and the image $k[\PP]$ is called a \emph{mean element}. However, we call $k[\PP]$ a KME as long as there is no risk of confusion. Then, we define the MMD between probability distributions $\PP$ and $\QQ$ by an integral probability metric (IPM) \citep{Muller1997} $\mmd[\PP,\QQ] := \sup_{\| f \|_k \le 1} | \PP[f] - \QQ[f] |$. It is an important characterization that an MMD is equivalent to the distance between KMEs, namely $\mmd[\PP,\QQ] = \| k[\PP] - k[\QQ] \|_k$.
We refer to \citet{Muandet2017} for more details on the KME and MMD for probability measures.

\begin{wrapfigure}[15]{R}{0.55\textwidth}
    \vspace{-0.45cm}
    \begin{algorithm}[H]
    \SetKwInOut{Input}{input}\SetKwInOut{Output}{output}
    \Input{initial point $x_0 \in \dom$, quality $\alpha \ge 0$}
    Let $u_0 := x_0$\\
    \For{$s=1$ \KwTo $t$}{
    Let $\eta_s := 2 / (2+s)$\\
    Find $u_s \in \dom$ s.t.
    $u_s \cdot \nabla g( x_s )$\\
    \quad $\le \min_{u \in \dom} u \cdot \nabla g(x_s) + \frac{1}{2} \alpha \eta C_g$\\
    (a) $\eta := \argmin \{ \eta \in [0,1] \mid  g( x_s + \eta( u - x_s ) )\}$\\
    (b-i) $x_{s+1} :=x_s + \eta (u_s - x_s)$\\
    (b-ii) $x_{s+1} := \argmin \{ x \in \conv(u_0, \ldots, u_{s}) \mid g(x) \}$\\
    }
    \caption{Conditional Gradient (CG) \citep{Jaggi2013}}\label{alg:cg}
    \end{algorithm}
\end{wrapfigure}

\paragraph{Conditional Gradient.}
The CG method is a.k.a. the Frank-Wolfe algorithm.
Let $\dom \subset \X$ be a closed convex bounded subset, $g \in C^1(\dom)$ be a convex function, and $C_g := \sup \{ \frac{2}{\eta^2} ( g(x')  - g(x) - (x'-x) \cdot \nabla g(x) ) \mid x,u \in \dom, \eta \in [0,1], x' = x + \eta (u-x)\}$ be the \emph{curvature constant}. If $g$ is $L$-smooth, i.e. $\nabla g$ is $L$-Lipschitz, then we have an estimate: $C_g \le \mathsf{diam}(\dom)^2 L$.
\refalg{cg} lists a typical CG method with three options in approximation quality $\alpha \ge 0$, line-search (a) and updating scheme (b-i/ii). The classical CG method corresponds to set $\alpha = 0$, to skip (a) and to select (b-i).
\citet[Theorem~1]{Jaggi2013} states that \refalg{cg} (with any options) achieves 
    $g(x_t) - g(x^*) \le \frac{2 C_g}{t+2}(1+\alpha).$
Here $x^* \in \dom$ is an optimal solution. We remark (1) that the fast convergence of KQ methods relies on the fast convergence of CG, and (2) that a CG method can achieve linear convergence $O(e^{-t})$ when the optimizer $x^*$ is in the strict interior \citep{Lacoste-Julien2015,Beck2004}. We refer to \citet{Jaggi2013} and \citet{Beck2004} for more details on CG.

\subsection{KME and MMD for Parameter Distributions} \label{sec:metric}
In a similar fashion to those for probability distributions, starting from a kernel function $K$ on the parameter space $\V$, we define the KME and MMD for parameter distributions $\mu \in \M$.
Rigorous proofs are given in \refsupp{kme.mmd}. 

Let $K : \V \times \V \to \CC$ be a measurable positive definite kernel,
and $\H$ be its associated reproducing kernel Hilbert space (RKHS) equipped with inner product $\iprod{\wdot, \wdot}_K$.
We often use a corollary of the reproducing property: $\| K(\wdot, v) \|_K = \sqrt{K(v,v)}$ for every $v \in \V$.

Let $\M_K$ be the collection of $\mu \in \M$ that satisfies the \emph{kernel moment condition}:
    $\int_{\V} \| K(\cdot, v) \|_K \dd |\mu|(v) < \infty.$
We need this to show the well-definedness of mean elements (i.e. the reproducing property of the expectation).

We define the \emph{kernel mean embedding (KME)} of $\mu \in \M_K$ by $K$, with a slight abuse of notation, as
    $\mu \mapsto K[\mu] := \int_{\V} K(v,\wdot) \dd \mu(v).$
Here, the integral is in the sense of the Bochner integral.
The moment conditions guarantees that $K[\mu]$ always belongs to $\H$.
When the KME is injective, we say $K$ is \emph{characteristic}, and KME $K : \M_K \to \H$ is literally an embedding.
Every mean element $K[\mu]$ enjoys the \emph{reproducing property of the `expectation'}:
    $\iprod{h, K[\mu]}_K = \mu[h]$ for any $h \in \H$.
Here, we wrote $\mu[h] := \int_\V h(v) \dd \mu(v)$.

We define the \emph{maximum mean discrepancy (MMD)} of $(\mu,\nu) \in \M_K \times \M_K$ as an \emph{integral `probability' metric (IPM)}:
    $\mmd[ \mu, \nu ]
    := \sup_{ \| h \|_K \le 1 } \left\{ \mu[h] - \nu[h] \right\}.$
By the reproducing property and the Cauchy-Schwarz inequality, it has another expression:
    $\mmd[ \mu, \nu ]
    = \sup_{ \| h \|_K \le 1 } \iprod{ h, K[\mu] - K[\nu] }_K
    = \| K[\mu] - K[\nu] \|_K.$
If $K$ is characteristic, then the MMD is strictly a distance on $\M_K$.

\subsection{Characterizations of the KME and MMD for Parameter Distributions} \label{sec:power}
Intuitively speaking, the mean element $K[\mu] \in \H_K$ provides an infinite-dimensional coordinate of $\mu \in \M_K$ because the Hilbert space $\H_K$ is an `infinite-dimensional' vector space ($\RR^\infty$).
At the same time, we can identify the mean element $K[\mu]$ as a `linear functional' $\mu \in \H_K^*$ because 
the MMD is an \emph{integral `probability' metric (IPM)} \citep{Muller1997}, or the dual norm $\| \mu \|_{K^*} = \| K[\mu] \|_K$.
In this perspective, the KME is an isometric embedding $\M_K \hookrightarrow \H_K^*$.
We refer to \citet{Simon-Gabriel2018} and \citet{Ambrosio2018} for similar developments in the KMEs for Schwarz distributions and geometric objects.

In general, the KME and MMD for parameter distributions is stronger than the induced norm $\| \mu \|_\F$ and weaker than the TV norm $\| \mu \|_{TV}$.

\begin{thm} \label{thm:relations}
(i) If $K$ and $\varphi$ satisfy $C_{K \to \F} := \| \varphi \|_{\F \otimes K} = \| \| \varphi(x; \cdot ) \|_K \|_\F < \infty$,
then
$\| S[\mu] \|_\F \le C_{K \to \F} \| K[\mu] \|_K$; 
(ii) If $K_1$ and $K_2$ satisfy $K_1 \ll \gamma^2 K_2$ for some $\gamma > 0$, then 
$\| K_1[\mu] \|_{K_1} \le \gamma \| K_2[\mu] \|_{K_2}$; and 
(iii) If $K$ satisfies $C_{TV \to K} := \| K \|_{K \otimes L^\infty} = \sup_{v \in \V} \sqrt{K(v,v)} < \infty$,
then
$\| K[\mu] \|_{K} \le C_{TV \to K} \| \mu \|_{TV}.$ 
\end{thm}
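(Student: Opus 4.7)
The three items are all boundedness statements for natural inclusions of Banach spaces: $\M_K \hookrightarrow \F$ via $S$, $\M_{K_2} \hookrightarrow \M_{K_1}$ by comparison of KMEs, and $\M_{TV} \hookrightarrow \M_K$. My plan is to attack each by reducing the left-hand side to an expression controlled through the reproducing property of the expectation stated in \refsec{metric}.

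For (i), I would argue pointwise first. The finiteness of $C_{K\to\F}$ implicitly places $\varphi(x,\cdot)$ in $\H$ for $\F$-almost every $x$, so the reproducing property of the expectation yields
\begin{equation*}
    S[\mu](x) \;=\; \mu[\varphi(x,\wdot)] \;=\; \iprod{\varphi(x,\wdot),\,K[\mu]}_K .
\end{equation*}
Cauchy--Schwarz in $\H$ gives the pointwise bound $|S[\mu](x)|\le \|\varphi(x,\wdot)\|_K \cdot \|K[\mu]\|_K$. Taking the $\F$-norm in $x$ and using absolute monotonicity of $\|\cdot\|_\F$ (treating $\|K[\mu]\|_K$ as a constant in $x$) converts the right-hand side into the mixed norm $C_{K\to\F}\|K[\mu]\|_K$. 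For (iii), I would invoke the Bochner-integral triangle inequality directly to the defining formula of KME:
\begin{equation*}
    \|K[\mu]\|_K
    \;=\; \Bigl\|\int_\V K(\wdot,v)\,\dd\mu(v)\Bigr\|_K
    \;\le\; \int_\V \|K(\wdot,v)\|_K\,\dd|\mu|(v)
    \;=\; \int_\V \sqrt{K(v,v)}\,\dd|\mu|(v) ,
\end{equation*}
which is bounded by $\sup_v \sqrt{K(v,v)}\cdot\|\mu\|_{TV} = C_{TV\to K}\|\mu\|_{TV}$.

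For (ii), I would expand the squared norm using the reproducing property twice,
\begin{equation*}
    \|K_i[\mu]\|_{K_i}^2
    \;=\; \int_\V\int_\V K_i(v,v')\,\dd\mu(v)\,\dd\overline{\mu}(v'),
\end{equation*}
and then interpret the hypothesis $K_1 \ll \gamma^2 K_2$ as saying that $\gamma^2 K_2 - K_1$ is a positive definite kernel. The standard extension of PD-ness from finite Hermitian forms to complex Borel measures then yields $\gamma^2\|K_2[\mu]\|_{K_2}^2 - \|K_1[\mu]\|_{K_1}^2 \ge 0$, and the claim follows after taking square roots.

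The main subtlety lies in (ii): transferring the defining finite-sum positivity of $\gamma^2 K_2 - K_1$ into the double integral against $\mu\otimes\overline{\mu}$ must be justified for a complex Borel $\mu$. My approach is to approximate $\mu$ by atomic measures $\mu_n = \sum_j w_j^{(n)}\delta_{v_j^{(n)}}$, where the defining positivity applies term-by-term, and then pass to the limit using the kernel moment condition together with dominated convergence. Items (i) and (iii) are essentially formal consequences of the reproducing property and the Bochner triangle inequality once the setup from \refsec{metric} is in place.
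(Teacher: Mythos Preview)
Your proposal is correct. Items (i) and (iii) are essentially the paper's arguments; in fact your (iii) via the Bochner triangle inequality is a touch cleaner than the paper's route, which squares the norm, expands it as the double integral $\int_{\V\times\V}K(v,v')\,\dd\mu\otimes\mu$, and then applies the pointwise kernel Cauchy--Schwarz $|K(v,v')|\le\sqrt{K(v,v)K(v',v')}$.

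Item (ii) is where you genuinely diverge from the paper. The paper invokes Aronszajn's equivalence---$K_1\ll\gamma^2K_2$ iff $\H_{K_1}\subset\H_{K_2}$ with $\|h\|_{K_2}\le\gamma\|h\|_{K_1}$---and then uses the IPM form of the KME norm, $\|K_i[\mu]\|_{K_i}=\sup_{\|h\|_{K_i}\le1}|\mu[h]|$, so that the unit ball of $\H_{K_1}$ sits inside the $\gamma$-ball of $\H_{K_2}$ and the supremum comparison is immediate. Your route instead reads $K_1\ll\gamma^2K_2$ as positive definiteness of $L:=\gamma^2K_2-K_1$ and derives $\iint L\,\dd\mu\,\dd\overline{\mu}\ge0$. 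Both are valid; the paper's proof is a one-liner once the IPM characterization from \refsec{metric} is in hand and avoids any limiting argument, whereas yours is more self-contained (it does not invoke Aronszajn). A small simplification of your approximation step: since $L(v,v)\le\gamma^2K_2(v,v)$, the $K_2$-moment condition immediately gives the $L$-moment condition, so $\iint L\,\dd\mu\,\dd\overline{\mu}=\|L[\mu]\|_L^2\ge0$ without passing through atomic approximations.
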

See \refsupp{proof.relation} for the proofs.
Putting all the relations together, 
we can say that MMD interpolates $\M_{TV}$ and $\M_\F$:
\begin{align}
    \M_{TV} \hookrightarrow \M_{K_2} \hookrightarrow \M_{K_1} \hookrightarrow \M_\F.
\end{align}

We supplement further solid characterizations.
(1) Recall that KME is an embedding $\M_K \hookrightarrow \H_K$. Thus, the Rademacher complexity is estimated by $\rad_n(\M_K) \le \rad_n(\H_K)$.
(2) Recall that the Sobolev space $H^s(\RR^d) (s > 0)$ is an RKHS with the Matérn kernel.
Thus, if we take $H_K = H^s$, then we have $\M_K \hookrightarrow H^{-s}$ (the dual Sobolev space).

\subsection{Kernel Quadrature for Parameter Distributions} \label{sec:gkq}

\begin{wrapfigure}[15]{R}{0.5\textwidth}
    \vspace{-0.45cm}
    \begin{algorithm}[H]
    \SetKwInOut{Input}{input}\SetKwInOut{Output}{output}
    \Input{initial $h_0 \in \dom(\lambda)$, quality $\alpha \ge 0$}
    Let $s_0 := h_0$\\
    \For{$j=1$ \KwTo $p$}{
    Let $\eta_j := 2 / (2+j)$\\
    Find $s_j \in \dom(\lambda)$ s.t. 
    $\iprod{s_j, h_j - K[\mu^o]} \le \min_{s \in \dom(\lambda)} \iprod{s, h_j - K[\mu^o]} + \frac{1}{2}\alpha \eta \lambda^2$\\
    (a) $\eta := \min\left\{ \frac{\iprod{ h_j - K[\mu^o], h_j - s_j }_K}{\| h_j - s_j \|_K^2}, 1 \right\}$\\
    (b-i) $h_{j+1} := h_j + \eta (s_j - h_j)$\\
    (b-ii) $h_{j+1} := \argmin_{h \in \conv(s_0, \ldots, s_{i+1})} J(h)$\\
    }
    \Output{model $h_p = \sum_{j=1}^p w_j K(\cdot, v_j)$}
    \caption{GKQ}\label{alg:cgH}
    \end{algorithm}
\end{wrapfigure}
Let $\M(\lambda) := \{ \mu \in \M_K \mid \| K[\mu] \|_K \le \lambda \}$ and $\dom(\lambda) := \{ K[\mu] \mid \mu \in \M(\lambda) \}$.
We formulate the \emph{general kernel quadrature (GKQ)}
as the CG method in $\dom(\lambda)$ for minimizing the MMD $\frac{1}{2} \| h - K[\mu^o] \|_K^2$
between an estimator $h = K[\mu]$ and the truth $h^o := K[\mu^o] \in \dom(\lambda)$.
Recall that by \refthm{relations}, as long as the constant $C_{K \to \F} = \| \varphi \|_{\F \otimes K}$ exists, (1) the convergence in MMD: $\| K[\mu_p] - K[\mu^o] \|_K \to 0$ always implies the convergence in $\F$: $\| S[\mu_p] - S[\mu^o] \|_{\F} \to 0$, which motivates us to minimizing MMD; and (2) the restriction $\| K[ \mu ] \|_K \le \lambda$ is translated as $\| S[\mu] \|_{\F} \le C_{K \to L^2} \lambda$.

We apply CG as listed in \refalg{cgH}.
We remark (1) that the curvature constant is given by $C_J = \lambda^2$ because the $J$ is $1$-smooth and the diameter of $\dom(\lambda)$ is $\lambda$,
and (2) that if $\lambda = \| K[\mu^o] \|_K + \delta$ for some $\delta > 0$, or $\mu^o$ is in the strict interior of $\dom(\lambda)$, then GKQ achieves a faster convergence.

\begin{thm}[Approximation rates of GKQ] \label{thm:approx}
After $p$ iterations of \refalg{cgH}, the loss converges at
    $\frac{1}{2}\| h_p - K[\mu^o] \|_K^2 \le \frac{2 \lambda^2}{p+2}.$
Furthermore, if $\lambda = \| K[\mu^o] \|_K + \delta$ with some $\delta>0$, then 
$\frac{1}{2}\| h_p - K[\mu^o] \|_K^2 \le
\frac{4 \lambda^4}{p^2 \delta^2}$ without (a), and
$\le 4 \lambda^2 \exp\left( - \frac{p \delta^2}{4 \lambda^2} \right)$ with (a).
\end{thm}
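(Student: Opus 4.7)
The proof reduces to applying the conditional-gradient convergence theory quoted in the paragraph preceding \refalg{cgH} to the objective $J(h) := \frac{1}{2}\|h - K[\mu^o]\|_K^2$ on the feasible set $\dom(\lambda)$, and then sharpening it under the strict-interior hypothesis. First I would verify the prerequisites of Jaggi's theorem: $J$ is convex and Fréchet differentiable with $\nabla J(h) = h - K[\mu^o]$, and is $1$-smooth because its Hessian is the identity on $\H_K$. The feasible set $\dom(\lambda)$ is closed, convex, and bounded because $\mu \mapsto \|K[\mu]\|_K$ is a seminorm on $\M_K$ and $K$ is linear, giving the curvature bound $C_J \le \lambda^2$ stated before the theorem. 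Since $\mu^o \in \M(\lambda)$, the element $h^o := K[\mu^o]$ lies in $\dom(\lambda)$ and minimizes $J$ with value zero. Plugging these facts into Jaggi's bound with $\alpha = 0$ and $t = p$ immediately gives the first inequality $\frac{1}{2}\|h_p - K[\mu^o]\|_K^2 \le 2\lambda^2/(p+2)$.

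For the two faster rates, the hypothesis $\lambda = \|K[\mu^o]\|_K + \delta$ together with the triangle inequality puts the whole ball $B(h^o,\delta) := \{h : \|h - h^o\|_K \le \delta\}$ inside $\dom(\lambda)$. The key geometric consequence is an \emph{interior Frank--Wolfe gap lemma}: for every $h \in \dom(\lambda)$, the choice $s := h^o - \delta\,\nabla J(h)/\|\nabla J(h)\|_K \in B(h^o,\delta) \subset \dom(\lambda)$ yields, after expanding and using $\iprod{h - h^o, \nabla J(h)}_K = \|h-h^o\|_K^2 = 2J(h) \ge 0$,
\[
\max_{s \in \dom(\lambda)} \iprod{h - s, \nabla J(h)}_K \ge \delta \|\nabla J(h)\|_K = \delta \sqrt{2J(h)}.
\]
Combining this with the $1$-smoothness descent inequality produces the one-step recursion
\[
J(h_{j+1}) \le J(h_j) - \eta_j\,\delta\sqrt{2J(h_j)} + \tfrac{1}{2}\eta_j^2 \lambda^2.
\]

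From this single recursion the two stronger bounds follow by different choices of $\eta_j$. With option~(a), I would analytically optimize $\eta$ over $[0,1]$ in the quadratic upper bound; the minimizer $\eta^\star \propto \delta\sqrt{J(h_j)}/\lambda^2$ yields a geometric contraction $J(h_{j+1}) \le J(h_j)\bigl(1 - c\,\delta^2/\lambda^2\bigr)$ for an explicit constant $c>0$. Iterating $p$ times and bounding the initial gap $J(h_0) \le \tfrac{1}{2}(2\lambda)^2 = 2\lambda^2$ gives the exponential estimate, matching the third claim up to the absolute constants. Without option~(a), I would substitute the predetermined step $\eta_j = 2/(j+2)$ and perform induction on the ansatz $J(h_j) \le 4\lambda^4/\bigl((j+1)\delta\bigr)^2$, using that the descent term $\eta_j\delta\sqrt{J(h_j)}$ scales like $1/(j+1)^2$ and eventually dominates the quadratic error $\eta_j^2\lambda^2/2$, yielding the $O(1/p^2)$ bound.

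The main obstacle is the $O(1/p^2)$ bound without line search: the descent term involves $\sqrt{J(h_j)}$ rather than $J(h_j)$, so the induction is non-standard and requires choosing the constant in the ansatz so that it simultaneously absorbs the $\eta_j\delta\sqrt{J(h_j)}$ descent and the $\eta_j^2\lambda^2/2$ penalty in every iteration; one also has to check that the induction does not break at early iterations where the $\sqrt{J(h_j)}$ term may be small. The linear-rate case with line search is more routine once the interior-gap lemma is in hand and essentially follows the arguments of \citet{Lacoste-Julien2015} and \citet{Beck2004}.
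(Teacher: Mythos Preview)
Your proposal is correct and follows the same route as the paper, which does not give a standalone proof of \refthm{approx} but obtains the first bound directly from \citet[Theorem~1]{Jaggi2013} with the curvature constant $C_J=\lambda^2$ (as recorded just before the theorem) and attributes the two faster rates under the strict-interior condition to the conditional-gradient results of \citet{Lacoste-Julien2015} and \citet{Beck2004}. Your write-up simply unpacks those cited arguments---the interior Frank--Wolfe gap inequality and the resulting one-step recursion---in more detail than the paper does, but the underlying strategy is identical.
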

All the three rates have two notable characteristics: They are (1) \emph{independent of the dimension $d$} and (2) controlled by a new complexity measure $\| K[\mu^o] \|_K$.
The first rate $O(1/p)$ reproduces the same rate as the traditional \emph{Maurey-Jones-Barron (MJB) bound} when $\F = L^2(\PP)$ (see \refprop{kurkova}),
or the \emph{uniform law of large numbers (ULLN}) when $\F = L^\infty(\X)$ (see \refthm{ulln.sir});
and the rest two rates $O(1/p^2)$ and $O(e^{-p})$ are new faster results.
The dimension-independence is one of the most notable characteristics of the MJB theory.
For example, another traditional rate such as Jackson-type ratio $O(p^{-s/m})$ \citep{DeVore1993} depends on both input dimension $m$ and smoothness $p$. This is due to the Jackson theory is based on the fixed basis such as Fourier and Taylor series expansion.
We have succeeded to inherit the spirit of MJB by establishing a new constructive proof scheme (\refalg{cgH}).
Moreover, the complexity measure $\| K[\mu^o] \|_K$ is sharper than the original complexity measure $\| \mu^o \|_{TV}$ in the MJB bound
(because \refthm{relations} claims that $\| K[\mu] \|_K \lesssim \| \mu \|_{TV}$.)

\section{Empirical and Unitary Kernel Quadrature} \label{sec:implementation}
We investigate several implementation schemes, EKQ and UKQ, for GKQ. In order to conduct GKQ in practice, we need an estimator of the KME $K[\mu^o]$.
In this section, we assume that the range $\F$ of $S$ to be either an RKHS with kernel $k : \X \times \X \to \CC$ or $L^2(\PP)$ with `kernel' $k(x,y) := \delta(x-y)$.

\subsection{GKQ with Empirical KME}
To begin with, we suppose that we know a pseudo-inverse $S^\dag$ and consider empirically approximating it.
A pseudo-inverse $S^\dag : \F \to \M$ of $S : \M \to \F$ is (not necessarily unique but) any operator that satisfies $S[ S^\dag[f] ] = f$ for any $f \in \F$.
In other words, if $\mu_f = S^\dag[f]$ then $ S[\mu_f] = f$ for any $f \in \F$.
We consider empirically estimating $S^\dag[f]$.

The following proposition reveals the connection between a pseudo-inverse and a reproducing kernel.
\begin{prop} \label{prop:dagphi}
Suppose that $\F$ is an RKHS with kernel $k : \X \times \X \to \RR$, and assume that there exists a function $\dagphi : \X \times \V \to \CC$ s.t.
    $k(x,y) = \int_\V \varphi(x;v) \dagphi(y;v) \dd Q(v)$ for $x,y \in \X$.
Then, 
$S^\dag[f](v) := \iprod{f,\dagphi(\cdot ; v) \dd Q}_\F$ for $f \in \F$ and $v \in \V$
is a pseudo-inverse operator of $S : \M \to \F$.
\end{prop}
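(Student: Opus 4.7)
I read the statement as asserting that $S^\dag[f]$ is the (signed/complex) Borel measure on $\V$ whose Radon--Nikodym derivative with respect to $Q$ is $\rho_f(v) := \iprod{f,\dagphi(\wdot;v)}_\F$; the notation ``$\iprod{f,\dagphi(\wdot;v)\dd Q}_\F$'' is then just packaging the density together with $\dd Q$ into a measure. Verifying $S\circ S^\dag = \id_\F$ reduces to a short computation: substitute, interchange a $v$-integral with the $\F$-inner product, and invoke the reproducing property.

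Explicitly, I would start from
\begin{align*}
    S[S^\dag[f]](x) = \int_\V \varphi(x;v)\, \iprod{f,\dagphi(\wdot;v)}_\F \dd Q(v),
\end{align*}
and use that the bounded linear functional $h\mapsto\iprod{f,h}_\F$ commutes with Bochner integration in $\F$ to pull $\iprod{f,\wdot}_\F$ outside the $v$-integral, yielding
\begin{align*}
    S[S^\dag[f]](x) = \iprod{f,\,\int_\V \varphi(x;v)\,\dagphi(\wdot;v)\dd Q(v)}_\F.
\end{align*}
By the hypothesis on the factorisation of $k$, the inner Bochner integral, viewed as a function of its free argument $y$, equals $k(x,y)$, and symmetry of the real kernel gives $k(x,\wdot)=k(\wdot,x)\in\F$. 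The reproducing property of $\F$ then closes the argument: $\iprod{f,k(\wdot,x)}_\F=f(x)$ for every $x\in\X$.

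The one genuinely nontrivial step is justifying the interchange of $\int_\V\wdot\dd Q(v)$ with $\iprod{f,\wdot}_\F$. This requires $v\mapsto\varphi(x;v)\dagphi(\wdot;v)$ to be Bochner integrable in $\F$ for each $x$, i.e.\ $\int_\V|\varphi(x;v)|\,\|\dagphi(\wdot;v)\|_\F\dd|Q|(v)<\infty$, together with measurability and $L^1(|Q|)$-ness of $\rho_f$ so that $S^\dag[f]$ is in fact a finite complex measure on $\V$. These are in the same spirit as the kernel moment condition used earlier to define the KME, and I would expect them to be stated as implicit regularity hypotheses on $(\varphi,\dagphi,Q)$ rather than re-derived. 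The only remaining housekeeping is tracking the sesquilinearity convention in the complex case, but since $k$ is assumed real and symmetric the scalar $\varphi(x;v)$ can be moved in and out of the inner product without incurring conjugates that affect the final identity.
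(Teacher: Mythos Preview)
Your proposal is correct and matches the paper's approach exactly: the paper's one-line proof is precisely the computation $S[S^\dag[f]](x)=\iprod{f,\int_\V\varphi(\wdot;v)\dagphi(x;v)\dd v}_\F=\iprod{f,k(\wdot,x)}_\F=f(x)$, i.e.\ substitute, swap the $v$-integral with the inner product, identify $k$, and apply the reproducing property. Your additional remarks on Bochner integrability and sesquilinearity are careful elaborations of steps the paper leaves implicit, not a different route.
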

See \refsupp{proof.dagphi} for the proof.
For example, consider the Fourier transform, where $\varphi(x;\xi) = \exp(-i x \cdot \xi)$. 
This is a fundamental but extreme case because $\dagphi(x;\xi) = \exp(i x \cdot \xi)$ and $k(x,y) = \delta(x-y)$.
In addition, consider a neural network with $\varphi(x;a,b) = \sigma(a \cdot x - b)$. 
This is less familiar but an interesting case because the $\dagphi$ is known:
$\dagphi(x;a,b) = \rho(a \cdot x - b)$ with an admissible function $\rho$ (see \refsupp{eg.injective} for more details).

\paragraph{EKQ.}
Suppose that $\F$ is an RKHS with kernel $k$ (on $\X$) and we know the $\dagphi$ of pseudo-inverse $S^\dag$.
Given a dataset $D_n = \{ (x_i, y_i)\}_{i=1}^n$, and consider estimating the KME $K[ S^\dag[f^o] ] = K[ \mu^o ]$ with a certain kernel $K$ (on $\V$). Then, 
\begin{align}
\widehat{K[S^\dag[f]]}(v) := \sum_{i=1}^n \alpha_i K[\dagphi]( x_i, v ) \label{eq:emp.Kmu}
\end{align}
is an empirical estimator of the $K[\mu^o]$. Here, the coefficients $\alpha_i$ are set to satisfy $\sum_{j=1}^n \alpha_i k(x_i,x_j) = y_i$. This `empirical estimator' is derived by replacing $f$ in $S^\dag[f]$ with the empirical estimator $\fhat = \sum_{i=1}^n \alpha_i k( \cdot, x_i )$. By \refprop{dagphi}, $S^\dag[\fhat](v) = \langle \fhat, \dagphi(\cdot ; v) \rangle_\F = \sum_{i=1}^n \alpha_i \dagphi( x_i; v )$, which yields $K[ S^\dag[\fhat] ](v) = \sum_{i=1}^n \alpha_i K[\dagphi]( x_i; v )$.

Combined with the empirical KME \refeq{emp.Kmu}, we can substantiate the GKQ \refalg{cgH}.
For the kernels, we can use versatile kernels such as the Gaussian kernel \citep{Scholkopf2001} and the Stein kernel \citep{Chen2018}.
We call this scheme, the GKQ with empirical KME, an EKQ for short. The EKQ has relation to the empirical risk minimization (ERM).
\begin{prop} \label{prop:erm}
Assume $\M_K \hookrightarrow \M_\F$, then the EKQ yields an ERM in $\F$.
\end{prop}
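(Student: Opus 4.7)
The plan is to unpack the EKQ objective, translate it from $\H$ back to $\F$ via the assumed embedding, and then recognize the resulting quantity as a (regularized) empirical risk in $\F$.

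By construction, EKQ is the conditional-gradient scheme applied to $J(\mu) := \tfrac12 \|K[\mu] - \widehat{K[\mu^o]}\|_K^2$ on $\M(\lambda)$, and \refeq{emp.Kmu} together with \refprop{dagphi} yields $\widehat{K[\mu^o]} = K[\widehat{\mu}]$ with $\widehat{\mu} := S^\dag[\fhat]$ and $\fhat = \sum_{i=1}^n \alpha_i k(\cdot, x_i)$, $K_n \alpha = \mathbf{y}$ (the RKHS interpolant of the labels). Linearity of the KME then rewrites the loss as $J(\mu) = \tfrac12 \|K[\mu - \widehat{\mu}]\|_K^2$, so up to an additive constant EKQ is really minimizing the MMD between $\mu$ and the data-based surrogate $\widehat{\mu}$.

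Next I would invoke the assumption $\M_K \hookrightarrow \M_\F$. By \refthm{relations}(i), $\|S[\nu]\|_\F \le C_{K \to \F}\,\|K[\nu]\|_K$ for every $\nu \in \M_K$, so applying this to $\nu = \mu - \widehat{\mu}$ and using the defining identity $S[\widehat{\mu}] = S[S^\dag[\fhat]] = \fhat$ gives
\[
\|S[\mu] - \fhat\|_\F \;\le\; C_{K \to \F}\,\|K[\mu - \widehat{\mu}]\|_K.
\]
In other words, controlling the EKQ objective simultaneously controls the $\F$-distance from the candidate $S[\mu]$ to the interpolant $\fhat$, making $\|S[\mu]-\fhat\|_\F^2$ the natural surrogate to analyze.

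Finally I would expose $\|S[\mu] - \fhat\|_\F^2$ as an empirical risk in $\F$. When $\F = \H_k$, the representer theorem identifies $\fhat$ with the minimum-norm interpolant and an orthogonal decomposition against $V_n := \hull\{k(\cdot, x_i)\}_{i=1}^n$ gives an RKHS-complexity penalty $\|(I - P_{V_n})S[\mu]\|_\F^2$ plus a Mahalanobis-weighted squared loss $(\mathbf{s}-\mathbf{y})^\top K_n^{-1}(\mathbf{s}-\mathbf{y})$ with $\mathbf{s} := (S[\mu](x_i))_{i=1}^n$; the degenerate case $\F = L^2(\PP)$ with $k(x,y) = \delta(x-y)$ collapses this to the vanilla $\tfrac{1}{n}\sum_i |S[\mu](x_i) - y_i|^2$. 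Hence each EKQ iterate is an approximate minimizer of a regularized empirical risk, which is the sense in which EKQ ``yields an ERM in $\F$''. The main obstacle is making that phrase precise: with only a continuous embedding one ends up with an approximate ERM carrying a Mahalanobis weighting and an RKHS penalty rather than a literal least-squares ERM, and handling the $L^2(\PP)$ limit requires interpreting $\fhat$ distributionally and restricting the orthogonal decomposition to the finite-dimensional $V_n$, where every step is rigorous.
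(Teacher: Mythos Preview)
Your first step coincides with the paper's: use the embedding $\M_K \hookrightarrow \M_\F$ (i.e.\ \refthm{relations}(i)) together with $S[\widehat\mu]=\fhat$ to pass from the EKQ objective $\|K[\mu]-K[\widehat\mu]\|_K$ down to $\|S[\mu]-\fhat\|_\F$. Where you diverge is in the second step. The paper does something much more elementary: it applies Cauchy--Schwarz against each reproducing element $k_i:=k(\cdot,x_i)$ to obtain
\[
\|S[\mu]-\fhat\|_\F \;\ge\; \|k_i\|_k^{-1}\,\big|\iprod{S[\mu]-\fhat,\,k_i}_\F\big| \;=\; \|k_i\|_k^{-1}\,\big|S[\mu](x_i)-y_i\big|
\]
for every $i\in[n]$, and stops there. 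So ``yields an ERM'' in the paper's sense is just the observation that the EKQ objective dominates each training residual pointwise.

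Your route is correct but genuinely different in structure: instead of bounding residuals one at a time, you orthogonally decompose $S[\mu]-\fhat$ against $V_n=\hull\{k(\cdot,x_i)\}$ to exhibit $\|S[\mu]-\fhat\|_\F^2$ as a Mahalanobis-weighted empirical loss $(\mathbf s-\mathbf y)^\top K_n^{-1}(\mathbf s-\mathbf y)$ plus an RKHS penalty $\|(I-P_{V_n})S[\mu]\|_\F^2$. This buys you a sharper, more explicit description of the surrogate risk (and, as you note, a clean limit in the $L^2(\PP)$ case), at the cost of a longer argument and the need to assume $K_n$ invertible. The paper's Cauchy--Schwarz step is a two-line proof that avoids all of that, but only delivers control of each residual separately rather than of an aggregate empirical risk. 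Both readings of the loosely stated conclusion are legitimate.
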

\begin{proof}
By the continuity, $\| K[\mu] - K[\widehat{\mu_f}] \|_K \gtrsim \| S[\mu] - S[\widehat{\mu_f}] \|_\F = \| S[\mu] - \fhat \|_\F$; and by the Cauchy-Schwartz, $\| S[\mu] - \fhat \|_\F \ge \| k_i \|_k^{-1} \iprod{ S[\mu] - \fhat, k_i }_\F = |S[\mu](x_i) - y_i|$ for any $i \in [n]$. \qedhere
\end{proof}

\subsection{GKQ with Unitary Kernel}
However, in many cases, we do not know $S^\dag$ nor $\dagphi$. To circumvent this problem, we consider mapping $K[\dagphi(x;\cdot)](v) = \varphi(x;v)$ (or something else we can compute, if any) by appropriately constructing the kernel $K$.
We define the \emph{standard $\F$-unitary kernel}:
\begin{align}
    U(v,v') := \iprod{\varphi(\cdot,v), \varphi(\cdot,v')}_\F.
\end{align}
By construction, $U$ is a positive definite kernel. 
\begin{prop} \label{prop:kmeU}
For $\dagphi$ in \refprop{dagphi} and the standard $\F$-unitary kernel $U$, we have 
    $U[ \dagphi(x ; \cdot) ](v) = \varphi(x;v)$, and
    $U[ \mu ](v) = \iprod{ S[\mu], \varphi(\cdot;v) }_\F$.
Furthermore, $U$ induces the topology of $\F$: $\iprod{ U[\mu], U[\nu] }_U = \iprod{ S[\mu], S[\nu] }_\F$ for every $\mu, \nu \in \M_U$.
\end{prop}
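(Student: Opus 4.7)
The plan is to prove the three identities in order, unwinding the definition of the standard $\F$-unitary kernel $U(v,v') = \iprod{\varphi(\cdot,v),\varphi(\cdot,v')}_\F$ and interchanging the Bochner integral with the $\F$-inner product, which is allowed under the kernel moment condition $\mu \in \M_U$.

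First, for identity (2) I would compute directly from the KME definition and pull the $\F$-inner product outside the integral over $v'$:
\begin{align*}
U[\mu](v) \;=\; \int_\V U(v',v)\,\dd\mu(v')
\;=\; \int_\V \iprod{\varphi(\cdot,v'),\varphi(\cdot,v)}_\F \,\dd\mu(v')
\;=\; \Bigl\langle \int_\V \varphi(\cdot,v')\,\dd\mu(v'),\; \varphi(\cdot,v) \Bigr\rangle_\F
\;=\; \iprod{S[\mu],\varphi(\cdot,v)}_\F .
\end{align*}
The exchange is the main technical point; it follows by a standard Bochner/Fubini argument provided $\int \|\varphi(\cdot,v')\|_\F \,\dd|\mu|(v') < \infty$, which is equivalent to the kernel moment condition for $U$ since $\sqrt{U(v',v')} = \|\varphi(\cdot,v')\|_\F$. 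Identity (1) then follows as a special case: take the measure with density $\dagphi(x;\cdot)$ w.r.t.\ $Q$. Applying the same interchange and then invoking the hypothesis $k(\cdot,x) = \int \varphi(\cdot,v')\dagphi(x;v')\dd Q(v')$ of \refprop{dagphi}, we get
\begin{align*}
U[\dagphi(x;\cdot)](v) \;=\; \iprod{k(\cdot,x),\varphi(\cdot,v)}_\F \;=\; \varphi(x;v),
\end{align*}
by the reproducing property of $k$ in $\F$.

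For identity (3), the cleanest route is to apply the reproducing property of the expectation established in \refsec{metric} to the unitary kernel $U$: for any $\mu,\nu \in \M_U$ we have $\iprod{U[\nu],U[\mu]}_U = \mu[U[\nu]]$. Substituting the formula from (2) and exchanging integrals once more,
\begin{align*}
\iprod{U[\nu],U[\mu]}_U
\;=\; \int_\V \iprod{S[\nu],\varphi(\cdot,v)}_\F \,\dd\mu(v)
\;=\; \Bigl\langle S[\nu],\,\int_\V \varphi(\cdot,v)\,\dd\mu(v) \Bigr\rangle_\F
\;=\; \iprod{S[\nu],S[\mu]}_\F,
\end{align*}
completing the proof.

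The only non-trivial point is the Bochner-integrability needed to commute $\iprod{\cdot,\cdot}_\F$ with $\int \dd\mu$; I would handle this once and cite the same justification in both (1) and (2). Everything else is bookkeeping via the definition of $U$ and the reproducing properties already in place.
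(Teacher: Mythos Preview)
Your proof is correct and follows essentially the same route as the paper: both unwind the definition of $U$, interchange the $\F$-inner product with the integral to obtain $U[\mu](v)=\iprod{S[\mu],\varphi(\cdot;v)}_\F$, and then invoke the reproducing property of the expectation for identity (3). The only cosmetic differences are that the paper proves (1) directly (rather than as a special case of (2)) and reduces (3) to the isometry $\|U[\mu]\|_U=\|S[\mu]\|_\F$ via polarization instead of computing the mixed inner product; your explicit remark on the Bochner-integrability via $\sqrt{U(v,v)}=\|\varphi(\cdot,v)\|_\F$ is a welcome addition.
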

See \refsupp{proof.kmeU} for the proof.
As a consequence, however difficult to find a pseudo-inverse $S^\dag[f]$, its KME $U[ S^\dag[f] ]$ is easily computed in $\F$.
The term `unitary' comes from the following property that $U \circ S^\dag : \F \hookrightarrow \H_U$ becomes a unitary isomorphism:
    $\mmd[ \mu, \nu ] = \| U[\mu] - U[\nu] \|_U = \| S[\mu] - S[\nu] \|_\F.$ 
The isometry means that $\M_U$ is the largest space that satisfies $\M_K \hookrightarrow \M_\F$.

\paragraph{Empirical UKQ.}
\newcommand{\Vhat}{\widehat{\V}}
\begin{wrapfigure}[15]{R}{0.5\textwidth}
    \vspace{-0.5cm}
    \begin{algorithm}[H]
    \SetKwInOut{Input}{input}\SetKwInOut{Output}{output}
    \Input{dataset $\{ (x_i, y_i) \}_{i=1}^n$, proposal distribution $\QQ$ on $\V$, candidates number $q$, unit number $p$}
    Let $v_1 \sim \QQ$
    \For{$j=2$ \KwTo $p$}{
    Draw $q$ candidates $\Vhat := \{ v_t \}_{t=1}^q \sim \QQ$\\
    Find $v_j := 
    \argmax_{v \in \Vhat} \left[ 
    \sum_{j=1}^{p-1} \sum_{i=1}^n \varphi(x_i;v_j) \varphi(x_i;v) \right.$ \\
    $\left. +\frac{1}{2}\sum_{i=1}^n |\varphi(x_i;v)|^2 -j\sum_{i=1}^n y_i \varphi(x_i;v) \right]$
    }
    Let $w_j := 1/p$ for all $j \in [p]$.\\
    \Output{model $S[\muhat] := \sum_{j=1}^p w_j \varphi(\cdot ; v_j)$}
    \caption{GKQ in practice}\label{alg:ukq}
    \end{algorithm}
\end{wrapfigure}

For a concrete example, we list in \refalg{ukq} the empirical UKQ for $\F = L^2(\PP)$ with empirical unitary kernel $U(v,v') = \frac{1}{n}\sum_{i=1}^n \varphi(x_i;v) \varphi(x_i;v')$.
In order to avoid the non-convex subproblem in $v$, we draw sufficiently many, say $q$, candidates $v$ from a proposal distribution $\QQ = N(0,I)$, and select from the candidates one by one.
In general, the so-called Frank-Wolfe step, which orders to find $s_j \in \dom(\lambda)$ in \refalg{cgH}, is not always tractable.
Therefore, we employ the \emph{greedy minimization} for a practical implementation following \citet{Chen2010}.
We simply start from an arbitrary $v_1 \in \V$, and let $w_j = 1/p$ at the $p$-th iteration.
In the $p$-th iteration, we select $v$ that maximizes the difference $\mmd[ \mu, \mu_p ]^2 - \mmd[  \mu, \mu_{p-1} ]^2$,
or equivalently, 
    $v_p := \argmax_{v \in \V} \left[ \frac{1}{2}K(v, v) + \sum_{j=1}^{p-1} K(v_j, v) - p K[\mu](v)\right].$
This is also proved to converge at $O(1/p^2)$ \citep{Chen2010}.

If $S$ is not injective, then $U$ cannot be characteristic because the non-injectivity means that there exists a non-trivial null space $\ker S$, and $S[\mu + \ker \mu] = S[\mu]$ for any $\mu$, which implies $U[\mu + \ker \mu] = U[\mu]$ by the unitarity.
The non-trivial kernel exists, for example, when $S$ is a neural networks. See \refsupp{eg.injective} for more details.
Nevertheless, the unitary kernel may be a natural choice for the KME of parameter distributions because 
(1) 
the KME induces the geometry of $\F$ with $\M_U$ up to the inner product,
and (2) it kills the difference between $\mu$ and $\mu + \ker S$. In other words, even though the determination of $\ker S$ and $S^\dag$ is usually impossible,
we can handle the quotient space $\M_U / \ker S$ as a Hilbert space with inner product $\iprod{ [\mu], [\nu] } := \iprod{ U[\mu], U[\nu] }_U = \iprod{ S[\mu], S[\nu] }_\F$.
In theory, there are infinitely many different unitary kernels, and the standard unitary kernel is just one computable example for the implementation.

\section{Convergence Analysis}

We consider EKQ, the GKQ with empirical KME, 
or an empirical version of \refalg{cgH} where $K[\mu^o]$ is replaced with $K[\widehat{\mu_f}]$ explained in \refeq{emp.Kmu}.
An EKQ can be reformulated as ERM, and includes GKQ with empirical unitary kernels (UKQ).
The dataset $D_n = \{ (x_i, y_i) \}_{i=1}^n$ is generated from the regression model $Y = S[\mu^o](X) + \xi$ with unknown $\mu^o \in \M_K(\lambda) = \{ \mu \in \M_K \mid \| K[\mu] \|_K \le \lambda \}$ and $\xi \sim N(0,\sigma^2)$.
the final output $\fhat$ is given by $\fhat = S[\muhat]$, and $\muhat$ is given by EKQ.
See \refsupp{proof.gen.err.assums} for more detailed list of assumptions.

\begin{thm} \label{thm:gen.err}
Let $\M_p := \{ \sum_{j=1}^p w_j \delta_{v_j} \mid \sum_{j=1}^p |w_j| \le D_{TV}, |v_j| \le D_\V, \| K_v \|_K \le D_K \}$ and write $R_\infty := \sup_{\mu \in \M_p} \| S[\mu] \|_\infty$.
Assume that $\varphi(x;v)$ is $L_\varphi$-Lipschitz in $v$ at every $x$, and satisfies $C_{K \to \infty} = \| \varphi \|_{L^\infty \otimes K} < \infty$.
Then, after $p$ iterations of EKQ, we have a fast learning rate:
\begin{align}
    \| \fhat - f^o \|_2^2 &
    \lesssim \err^2_{app} + (R_\infty^2 + \sigma^2) \err^2_{est} + \err^2_{res}
\end{align}
with probability at least $1-2e^{-r}$ for any $r>0$.
Here, we defined
$\err^2_{app} := \inf_{\mu^* \in \M_p} \| S[\mu^*] - S[\mu^o]\|_2^2$, 
$\err^2_{est} := \frac{pd}{n}\log_+\left( 1 + \frac{4 L_\varphi \max\{ D_{TV}, D_\V \} \sqrt{n}}{R_\infty \wedge \sigma}\right)$,
and the residual $\err^2_{res} := O_p( \frac{R_\infty^2 + \sigma^2}{n} )$.
\end{thm}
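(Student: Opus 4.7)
The plan is to combine the empirical risk minimization viewpoint of EKQ with standard empirical process theory on the class $\M_p$: localized chaining under an $L^\infty$ covering of $\M_p$ yields the $pd/n$ rate, the Lipschitz assumption on $\varphi$ controls this covering, and the $\log_+$ arises from balancing the chaining radius against the noise scale $R_\infty \wedge \sigma$.

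First I observe that, after $p$ iterations of classical CG (option (b-i) in \refalg{cgH}), the output $\muhat$ is a convex combination of at most $p$ Dirac masses, hence $\muhat \in \M_p$ for suitable constants $D_{TV},D_\V,D_K$ derived from the input ball $\dom(\lambda)$ and the constant $C_{TV \to K}$ of \refthm{relations}. By \refprop{erm}, the EKQ step implicitly minimizes the squared empirical loss on this class, so for any comparator $\mu^* \in \M_p$ the basic ERM oracle inequality
\[
\|\fhat - f^o\|_n^2 \le \|S[\mu^*] - f^o\|_n^2 + \frac{2}{n}\sum_{i=1}^n \xi_i\bigl(\fhat(x_i) - S[\mu^*](x_i)\bigr)
\]
holds, where $\|\cdot\|_n$ denotes the empirical $L^2$-norm. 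Taking the infimum over $\mu^*\in \M_p$ produces the approximation term $\err^2_{app}$.

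Next, the $L_\varphi$-Lipschitzness of $\varphi$ in $v$ together with the bounds defining $\M_p$ yields a metric entropy estimate
\[
\log N\bigl(\M_p, \eps, \|\cdot\|_\infty\bigr) \lesssim p d \,\log_+\!\bigl(C L_\varphi \max\{D_{TV},D_\V\}/\eps\bigr).
\]
Inserting this into a Bernstein/Talagrand chaining bound for the empirical process, using the Gaussian-noise variance identity $\mathrm{Var}(\xi_i(f-g)(x_i)) = \sigma^2 \|f-g\|_2^2$ and the uniform bound $|(f-g)(x_i)| \le 2R_\infty$ for $f,g \in S[\M_p]$, produces the localized inequality
\[
\sup_{\mu,\mu^* \in \M_p} \Bigl\{ \tfrac{2}{n}\textstyle\sum_i \xi_i (S[\mu]-S[\mu^*])(x_i) - \tfrac{1}{4}\|S[\mu]-S[\mu^*]\|_2^2 \Bigr\} \lesssim (R_\infty^2+\sigma^2)\err^2_{est} + \err^2_{res},
\]
where the truncation inside $\log_+$ comes from balancing the chaining scale against $(R_\infty \wedge \sigma)/\sqrt{n}$. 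A parallel peeling/Talagrand argument provides the empirical-to-population transfer $\tfrac{1}{2}\|f-g\|_2^2 \le \|f-g\|_n^2 + \err^2_{res}$ uniformly over $\M_p$. Combining these two bounds with the ERM oracle inequality and simplifying produces the claim with probability at least $1 - 2e^{-r}$.

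The main obstacle is obtaining the \emph{fast} $1/n$ rate rather than a slow $1/\sqrt{n}$: this requires a genuinely localized Bernstein chaining that couples the stochastic noise term to $\|\fhat - S[\mu^*]\|_2^2$, paired with a peeling decomposition of $\M_p$ into $L^2$-shells. Tuning the localization radius against $R_\infty \wedge \sigma$ is precisely what produces the truncation inside the $\log_+$ factor; once this step is engineered, verifying $\muhat \in \M_p$, the noise concentration, and the empirical-to-population transfer all follow from standard Talagrand-type arguments.
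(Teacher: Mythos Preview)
Your proposal is correct and follows essentially the same route as the paper: an ERM basic inequality from the minimality of $\fhat$ over $\M_p$, an $L^\infty$ covering-number bound $\log N(\M_p,\|\cdot\|_\infty,\eps)\lesssim pd\log_+(L_\varphi\max\{D_{TV},D_\V\}/\eps)$ from the Lipschitz assumption, a localized (peeled) concentration argument to get the $pd/n$ fast rate, and a Talagrand-based empirical-to-population transfer. The only cosmetic differences are that the paper fixes a single comparator $f^*=\argmin_{\M_p}L$ rather than taking an infimum, and it exploits the Gaussian noise assumption to control the cross term $\frac{1}{n}\sum_i\xi_i(\fhat-f^*)(x_i)$ via the Gaussian concentration inequality (rather than Bernstein), combined with Dudley chaining on $\G(\delta)=\{f-f^*:\|f-f^*\|_n\le\delta\}$; the dyadic peeling over $\delta_j=2^{j-1}\sigma/\sqrt{n}$ is exactly what produces the $\log_+$ with scale $R_\infty\wedge\sigma$, matching your heuristic.
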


See \refsupp{proof.gen.err} for the proof.
By the assumptions, we can estiamte $R_\infty = \sup_{\mu \in \M(p)} \| S[\mu] \|_\infty = C_{K \to \infty} \min\{ \lambda, D_{TV} D_K \}$. This is notable that $D_K$ and $\| K[\mu] \|_K (\le \lambda)$ control the estimation error. Thus, we can understand that these quantities reflects the quality of embedding $K$.
We remark that the Lipschitz constant $L_\varphi$ can also be replaced with a Lipschitz constant $\| K_v \|_K$ in $v$.

Finally, we consider the model selection problem of parameter number $p$.
Recall that (the squared error) \refthm{approx} states either $\err^2_{app} = O(\lambda^2/p), O(\lambda^4/p^2)$ or $O(\exp(-p/\lambda^2))$.
Combined with the fast learning rate $\err^2_{est} = O_p( (R_\infty^2+\sigma^2) pd/n )$,
we can minimize the excess risk as $\| \fhat - f^o \|_2^2 = \widetilde{O}( R_\infty^2 \lambda^2 d/n )$ by letting $p = O( \lambda^2 \log \frac{n}{\lambda^2 R_\infty^2 d} )$. 

\section{Experimental Results}
\begin{figure*}[t]
\centerline{
\subfigure[SIR]{
\includegraphics[trim=3cm 1cm 3.5cm 2cm, clip, width=0.5\hsize]{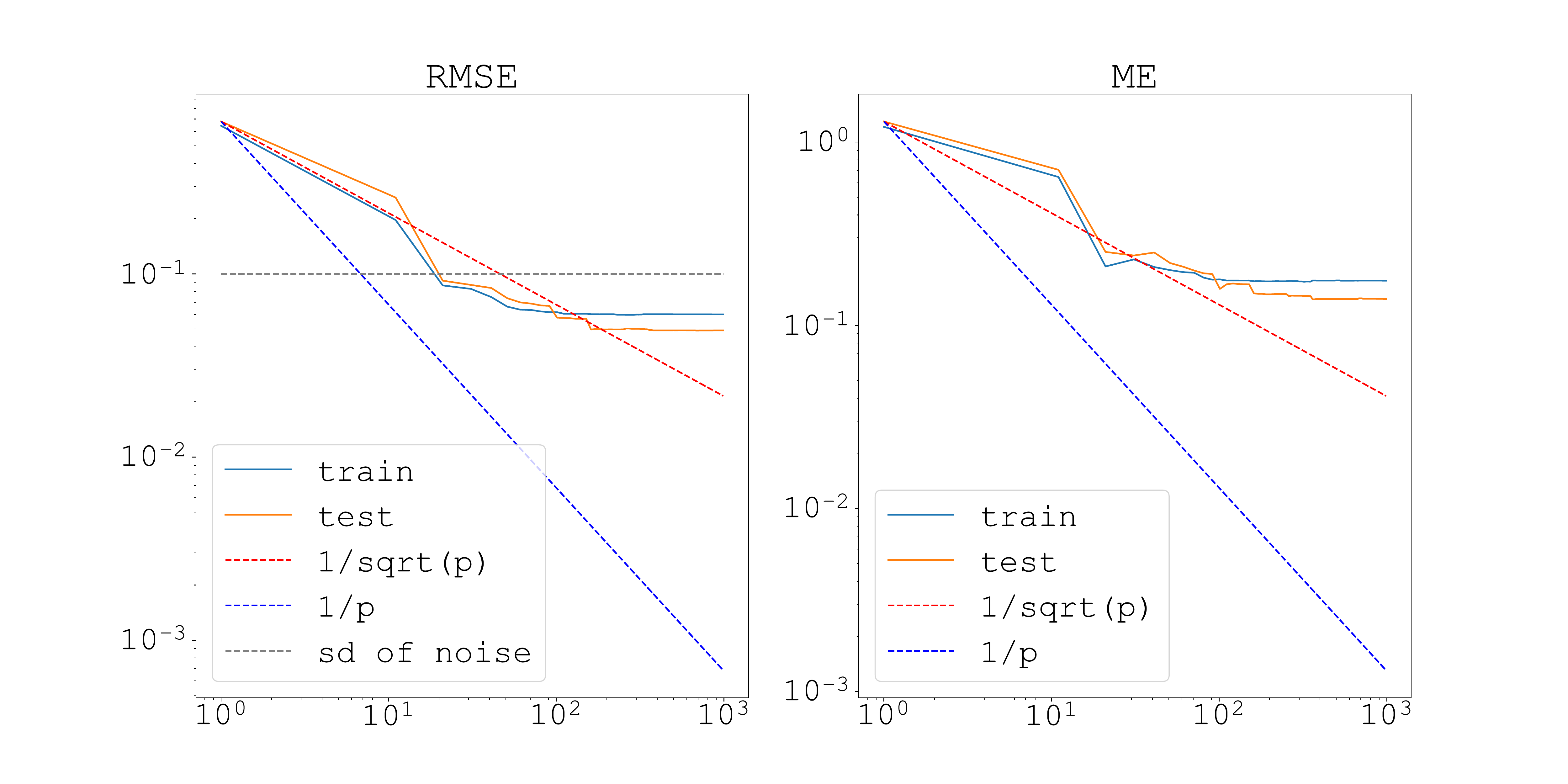}
\label{fig:exp1}
}
\subfigure[UKQ (proposed)]{
\includegraphics[trim=3cm 1cm 3.5cm 2cm, clip, width=0.5\hsize]{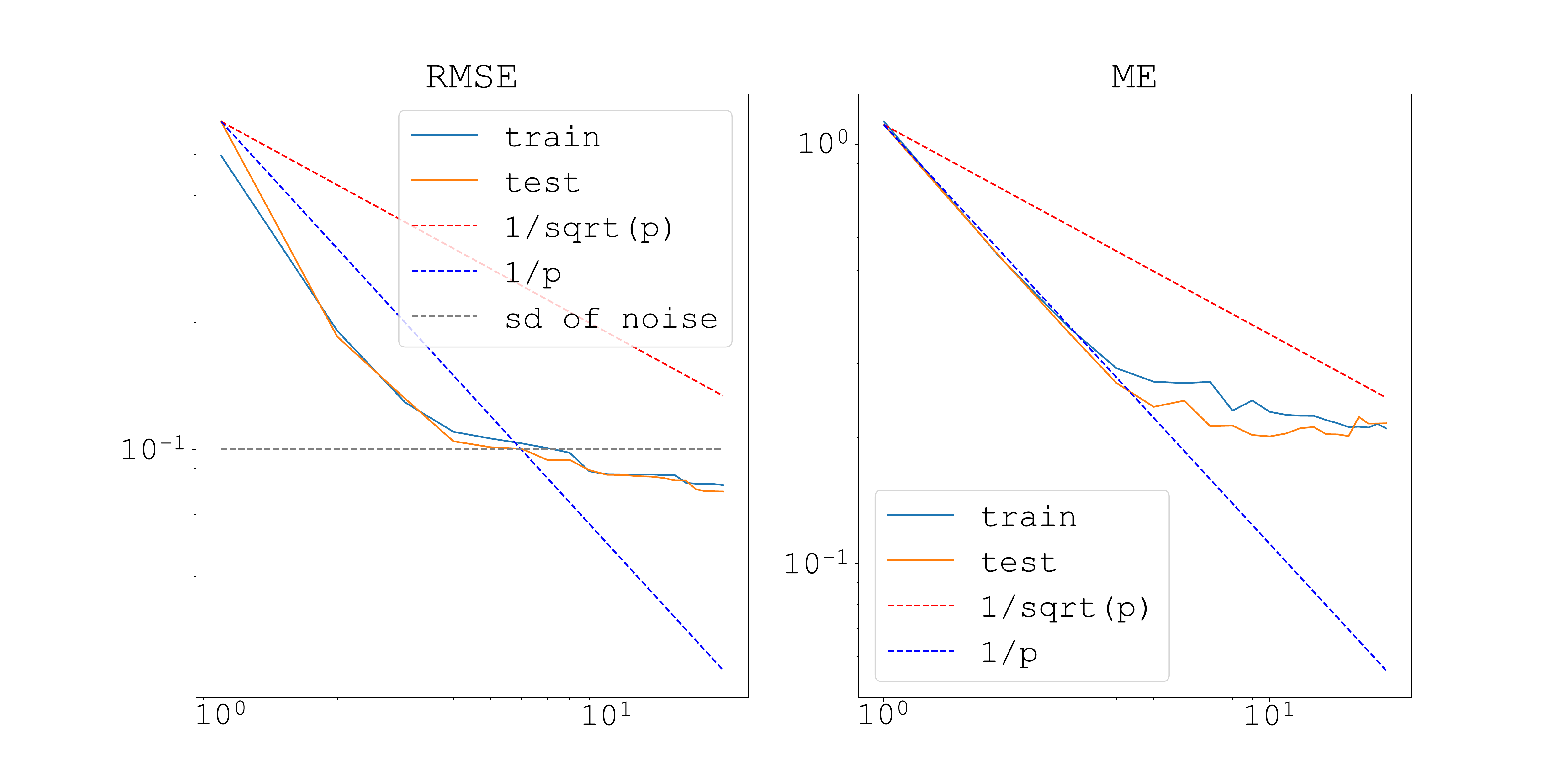}
\label{fig:exp2}
}
}
\caption{Error decay of numerical integration results (double logarithmic plot). Red and Blue lines correspond to $O(1/\sqrt{p})$ and $O(1/p)$ respectively. Gray line in the RMSE plot indicates the standard deviation of the noise in the dataset. Both SIR and UKQ reached Gray line, and UKQ decays faster than SIR.}
\label{fig:exp}
\end{figure*}

Figure 1 compares the error decay of the sampling importance resampling (SIR) and UKQ (proposed). In the comparison of RMSE, both SIR and UKQ reached the gray line, which is the standard deviation of the noise in the dataset. This indicates that the numerical integration was conducted correctly in both cases. SIR decayed along the red line, which corresponds to the slower rate $O(1/\sqrt{p})$.
On the other hand, UKQ decayed along the blue line, which corresponds to the faster rate $O(1/p)$.
See the Supplementary materials for more details.

\section{Conclusion}
    We developed the GKQ for training infinite-width models $S[\mu]$ with signed/complex/vector-valued parameter distributions.
    As overviewed in the Introduction, a variety of infinite-width models are investigated in the theoretical study of neural networks, but their topologies are questionable. The KME and MMD for parameter distributions induce the metric of functions with the space of parameters, 
    and the GKQ provides a fast training method based on the metric. By minimizing the approximation-estimation tradeoff in parameter number $p$, we can achieve the best squared error risk $O( \| K[\mu^o] \|_K^2 d/n)$ with a new complexity measure $\| K[\mu^o] \|_K$. 

\section*{Broader Impact}
We believe this section is not applicable to this paper because of the theoretical nature of this study.

\section*{Acknowledgments}
The author is grateful to Noboru Murata for suggesting the initial idea of the problem.
The author would like to thank Taiji Suzuki, Atsushi Nitanda, Masahiro Ikeda, Isao Ishikawa, Kenji Fukumizu, Motonobu Kanagawa, Toni Karvonen, François-Xavier Briol, Tatsunosuke Shimada, and Takuo Matsubara for useful discussions.
This work was supported by JSPS KAKENHI 18K18113.

\bibliography{summary_library}

\begin{thebibliography}{45}
\providecommand{\natexlab}[1]{#1}
\providecommand{\url}[1]{\texttt{#1}}
\expandafter\ifx\csname urlstyle\endcsname\relax
  \providecommand{\doi}[1]{doi: #1}\else
  \providecommand{\doi}{doi: \begingroup \urlstyle{rm}\Url}\fi

\bibitem[Ambrosio et~al.(2018)Ambrosio, Honda, Portegies, and
  Tewodrose]{Ambrosio2018}
L.~Ambrosio, S.~Honda, J.~W. Portegies, and D.~Tewodrose.
\newblock \href{http://arxiv.org/abs/1812.03712}{{Embedding of $RCD^*(K,N)$
  spaces in $L^2$ via eigenfunctions}}.
\newblock 2018.

\bibitem[Bach(2017{\natexlab{a}})]{Bach2017a}
F.~Bach.
\newblock \href{http://www.jmlr.org/papers/volume18/15-178/15-178.pdf}{{On the
  Equivalence between Kernel Quadrature Rules and Random Feature Expansions}}.
\newblock \emph{Journal of Machine Learning Research}, 18\penalty0
  (21):\penalty0 1--38, 2017{\natexlab{a}}.

\bibitem[Bach(2017{\natexlab{b}})]{Bach2017b}
F.~Bach.
\newblock \href{http://jmlr.org/papers/v18/14-546.html}{{Breaking the Curse of
  Dimensionality with Convex Neural Networks}}.
\newblock \emph{Journal of Machine Learning Research}, 18\penalty0
  (19):\penalty0 1--53, 2017{\natexlab{b}}.

\bibitem[Barron(1993)]{Barron1993}
A.~R. Barron.
\newblock \href{http://doi.org/10.1109/18.256500}{{Universal approximation
  bounds for superpositions of a sigmoidal function}}.
\newblock \emph{IEEE Transactions on Information Theory}, 39\penalty0
  (3):\penalty0 930--945, 1993.

\bibitem[Bartlett and Mendelson(2002)]{Bartlett2002}
P.~L. Bartlett and S.~Mendelson.
\newblock
  \href{http://www.jmlr.org/papers/volume3/bartlett02a/bartlett02a.pdf}{{Rademacher
  and Gaussian Complexities: Risk Bounds and Structural Results}}.
\newblock \emph{Journal of Machine Learning Research}, 3:\penalty0 463--482,
  2002.

\bibitem[Beck and Teboulle(2004)]{Beck2004}
A.~Beck and M.~Teboulle.
\newblock \href{http://doi.org/10.1007/s001860300327}{{A conditional gradient
  method with linear rate of convergence for solving convex linear systems}}.
\newblock \emph{Mathematical Methods of Operations Research}, 59\penalty0
  (2):\penalty0 235--247, 2004.

\bibitem[Belkin et~al.(2019)Belkin, Hsu, Ma, and Mandal]{Belkin2019a}
M.~Belkin, D.~Hsu, S.~Ma, and S.~Mandal.
\newblock \href{https://www.pnas.org/content/116/32/15849}{Reconciling modern
  machine-learning practice and the classical bias–variance trade-off}.
\newblock \emph{Proceedings of the National Academy of Sciences}, 116\penalty0
  (32):\penalty0 15849--15854, 2019.

\bibitem[Bengio et~al.(2006)Bengio, {Le Roux}, Vincent, Delalleau, and
  Marcotte]{LeRoux2006}
Y.~Bengio, N.~{Le Roux}, P.~Vincent, O.~Delalleau, and P.~Marcotte.
\newblock
  \href{https://papers.nips.cc/paper/2800-convex-neural-networks.pdf}{{Convex
  neural networks}}.
\newblock In \emph{Advances in Neural Information Processing Systems 18}, pages
  123--130, 2006.

\bibitem[Briol et~al.(2016)Briol, Oates, Girolami, Osborne, and
  Sejdinovic]{Briol2015a}
F.-X. Briol, C.~J. Oates, M.~Girolami, M.~A. Osborne, and D.~Sejdinovic.
\newblock \href{http://arxiv.org/abs/1512.00933}{{Probabilistic Integration: A
  Role for Statisticians in Numerical Analysis?}}
\newblock 2016.

\bibitem[Cand{\`{e}}s(1998)]{Candes.PhD}
E.~J. Cand{\`{e}}s.
\newblock
  \emph{\href{https://searchworks.stanford.edu/view/9949708}{{Ridgelets: theory
  and applications}}}.
\newblock PhD thesis, Standford University, 1998.

\bibitem[Cand{\`{e}}s(1999)]{Candes.HA}
E.~J. Cand{\`{e}}s.
\newblock \href{http://doi.org/10.1006/acha.1998.0248}{{Harmonic analysis of
  neural networks}}.
\newblock \emph{Applied and Computational Harmonic Analysis}, 6\penalty0
  (2):\penalty0 197--218, 1999.

\bibitem[Chen et~al.(2018)Chen, Mackey, Gorham, Briol, and Oates]{Chen2018}
W.~Y. Chen, L.~Mackey, J.~Gorham, F.-X. Briol, and C.~J. Oates.
\newblock \href{http://proceedings.mlr.press/v80/chen18f/chen18f.pdf}{{Stein
  Points}}.
\newblock In \emph{Proceedings of The 35th International Conference on Machine
  Learning}, pages 844--853, 2018.

\bibitem[Chen et~al.(2010)Chen, Welling, and Smola]{Chen2010}
Y.~Chen, M.~Welling, and A.~Smola.
\newblock
  \href{https://event.cwi.nl/uai2010/papers/UAI2010_0238.pdf}{{Super-Samples
  from Kernel Herding}}.
\newblock In \emph{Proceedings of the 26th Conference on Uncertainty in
  Artificial Intelligence}, pages 109--116, 2010.

\bibitem[Chizat and Bach(2018)]{Chizat2018}
L.~Chizat and F.~Bach.
\newblock
  \href{https://papers.nips.cc/paper/7567-on-the-global-convergence-of-gradient-descent-for-over-parameterized-models-using-optimal-transport/}{{On
  the Global Convergence of Gradient Descent for Over-parameterized Models
  using Optimal Transport}}.
\newblock In \emph{Advances in Neural Information Processing Systems 32}, pages
  3036--3046, 2018.

\bibitem[DeVore and Lorentz(1993)]{DeVore1993}
R.~A. DeVore and G.~G. Lorentz.
\newblock
  \emph{\href{https://www.springer.com/gp/book/9783540506270}{{Constructive
  Approximation}}}.
\newblock Springer-Verlag Berlin Heidelberg, 1993.

\bibitem[Frankle and Carbin(2019)]{Frankle2019}
J.~Frankle and M.~Carbin.
\newblock \href{https://openreview.net/forum?id=rJl-b3RcF7}{{The Lottery Ticket
  Hypothesis: Finding Sparse, Trainable Neural Networks}}.
\newblock In \emph{International Conference on Learning Representations}, pages
  1--42, 2019.

\bibitem[Gin{\'{e}} and Nickl(2015)]{Gine2015}
E.~Gin{\'{e}} and R.~Nickl.
\newblock \emph{\href{http://doi.org/10.1017/CBO9781107337862}{{Mathematical
  Foundations of Infinite-Dimensional Statistical Models}}}.
\newblock Cambridge Series in Statistical and Probabilistic Mathematics.
  Cambridge University Press, 2015.

\bibitem[Hastie et~al.(2019)Hastie, Montanari, Rosset, and
  Tibshirani]{Hastie2019}
T.~Hastie, A.~Montanari, S.~Rosset, and R.~J. Tibshirani.
\newblock \href{http://arxiv.org/abs/1903.08560}{Surprises in High-Dimensional
  Ridgeless Least Squares Interpolation}.
\newblock mar 2019.

\bibitem[Jacot et~al.(2018)Jacot, Gabriel, and Hongler]{Jacot2018}
A.~Jacot, F.~Gabriel, and C.~Hongler.
\newblock
  \href{http://papers.nips.cc/paper/8076-neural-tangent-kernel-convergence-and-generalization-in-neural-networks.pdf}{{Neural
  Tangent Kernel: Convergence and Generalization in Neural Networks}}.
\newblock In \emph{Advances in Neural Information Processing Systems 31}, pages
  8571--8580, 2018.

\bibitem[Jaggi(2013)]{Jaggi2013}
M.~Jaggi.
\newblock \href{http://proceedings.mlr.press/v28/jaggi13.pdf}{{Revisiting
  Frank-Wolfe: Projection-Free Sparse Convex Optimization}}.
\newblock In \emph{Proceedings of the 30th International Conference on Machine
  Learning}, volume~28, pages 427--435, 2013.

\bibitem[K\r{u}rkov\'{a}(2012)]{Kurkova2012}
V.~K\r{u}rkov\'{a}.
\newblock \href{http://doi.org/10.1016/j.neunet.2012.05.002}{{Complexity
  estimates based on integral transforms induced by computational units}}.
\newblock \emph{Neural Networks}, 33:\penalty0 160--167, 2012.

\bibitem[Lacoste-Julien and Jaggi(2015)]{Lacoste-Julien2015}
S.~Lacoste-Julien and M.~Jaggi.
\newblock
  \href{http://papers.nips.cc/paper/5925-on-the-global-linear-convergence-of-frank-wolfe-optimization-variants.pdf}{{On
  the Global Linear Convergence of Frank-Wolfe Optimization Variants}}.
\newblock In \emph{Advances in Neural Information Processing Systems 28}, pages
  496--504, 2015.

\bibitem[{Le Roux} and Bengio(2007)]{LeRoux2006b}
N.~{Le Roux} and Y.~Bengio.
\newblock
  \href{http://www.jmlr.org/proceedings/papers/v2/leroux07a/leroux07a.pdf}{{Continuous
  Neural Networks}}.
\newblock In \emph{Proceedings of The 11th International Conference on
  Artificial Intelligence and Statistics (AISTATS) 2007}, number~2, pages
  404--411, 2007.

\bibitem[Mei et~al.(2018)Mei, Montanari, and Nguyen]{Mei2018}
S.~Mei, A.~Montanari, and P.-M. Nguyen.
\newblock \href{http://doi.org/10.1073/PNAS.1806579115}{{A mean field view of
  the landscape of two-layer neural networks}}.
\newblock \emph{Proceedings of the National Academy of Sciences}, 115\penalty0
  (33):\penalty0 E7665--E7671, 2018.

\bibitem[Mendelson(2002)]{Mendelson2002}
S.~Mendelson.
\newblock \href{http://doi.org/10.1109/TIT.2002.1013137}{{Improving the sample
  complexity using global data}}.
\newblock \emph{IEEE Transactions on Information Theory}, 48\penalty0
  (7):\penalty0 1977--1991, 2002.

\bibitem[Muandet et~al.(2017)Muandet, Fukumizu, Sriperumbudur, and
  Sch{\"{o}}lkopf]{Muandet2017}
K.~Muandet, K.~Fukumizu, B.~Sriperumbudur, and B.~Sch{\"{o}}lkopf.
\newblock \href{http://doi.org/10.1561/2200000060}{{Kernel Mean Embedding of
  Distributions: A Review and Beyond}}.
\newblock \emph{Foundations and Trends{\textregistered} in Machine Learning},
  10\penalty0 (1-2):\penalty0 1--141, 2017.

\bibitem[M{\"{u}}ller(1997)]{Muller1997}
A.~M{\"{u}}ller.
\newblock \href{http://doi.org/10.2307/1428011}{{Integral Probability Metrics
  and Their Generating Classes of Functions}}.
\newblock \emph{Advances in Applied Probability}, 29\penalty0 (2):\penalty0
  429--443, 1997.

\bibitem[Murata(1996)]{Murata1996}
N.~Murata.
\newblock \href{http://doi.org/10.1016/0893-6080(96)00000-7}{{An integral
  representation of functions using three-layered betworks and their
  approximation bounds}}.
\newblock \emph{Neural Networks}, 9\penalty0 (6):\penalty0 947--956, 1996.

\bibitem[Nitanda and Suzuki(2017)]{Nitanda2017}
A.~Nitanda and T.~Suzuki.
\newblock \href{http://arxiv.org/abs/1712.05438}{{Stochastic Particle Gradient
  Descent for Infinite Ensembles}}.
\newblock 2017.

\bibitem[{Radford M. Neal}(1996)]{Neal1996}
{Radford M. Neal}.
\newblock \emph{\href{http://doi.org/10.1007/978-1-4612-0745-0}{{Bayesian
  Learning for Neural Networks}}}.
\newblock Lecture Notes in Statistics. Springer-Verlag New York, 1996.

\bibitem[Rahimi and Recht(2008)]{Rahimi2008}
A.~Rahimi and B.~Recht.
\newblock
  \href{http://papers.nips.cc/paper/3182-random-features-for-large-scale-kernel-machines.pdf}{{Random
  Features for Large-Scale Kernel Machines}}.
\newblock In \emph{Advances in Neural Information Processing Systems 20}, pages
  1177--1184, 2008.

\bibitem[Rahimi and Recht(2009)]{Rahimi2009}
A.~Rahimi and B.~Recht.
\newblock
  \href{http://papers.nips.cc/paper/3495-weighted-sums-of-random-kitchen-sinks-replacing-minimization-with-randomization-in-learning.pdf}{{Weighted
  Sums of Random Kitchen Sinks: Replacing minimization with randomization in
  learning}}.
\newblock In \emph{Advances in Neural Information Processing Systems 21}, pages
  1313--1320, 2009.

\bibitem[Rotskoff and Vanden-Eijnden(2018)]{Rotskoff2018}
G.~Rotskoff and E.~Vanden-Eijnden.
\newblock
  \href{http://papers.nips.cc/paper/7945-parameters-as-interacting-particles-long-time-convergence-and-asymptotic-error-scaling-of-neural-networks.pdf}{{Parameters
  as interacting particles: long time convergence and asymptotic error scaling
  of neural networks}}.
\newblock In \emph{Advances in Neural Information Processing Systems 31}, pages
  7146--7155, 2018.

\bibitem[Rubinstein and Kroese(2016)]{Rubinstein2016}
R.~Y. Rubinstein and D.~P. Kroese.
\newblock \emph{\href{http://doi.org/10.1002/9781118631980}{{Simulation and the
  Monte Carlo Method}}}.
\newblock Wiley, third edition, 2016.

\bibitem[Schapire and Freund(2012)]{Schapire2012}
R.~E. Schapire and Y.~Freund.
\newblock \emph{\href{https://mitpress.mit.edu/books/boosting}{{Boosting:
  Foundations and Algorithms}}}.
\newblock MIT Press, 2012.

\bibitem[Sch{\"{o}}lkopf and Smola(2001)]{Scholkopf2001}
B.~Sch{\"{o}}lkopf and A.~Smola.
\newblock
  \emph{\href{https://mitpress.mit.edu/books/learning-kernels}{{Learning with
  Kernels: Support Vector Machines, Regularization, Optimization, and
  Beyond}}}.
\newblock MIT Press, 2001.

\bibitem[Simon-Gabriel and Sch{\"{o}}lkopf(2018)]{Simon-Gabriel2018}
C.-J. Simon-Gabriel and B.~Sch{\"{o}}lkopf.
\newblock \href{http://jmlr.org/papers/v19/16-291.html}{{Kernel Distribution
  Embeddings: Universal Kernels, Characteristic Kernels and Kernel Metrics on
  Distributions}}.
\newblock \emph{Journal of Machine Learning Research}, 19\penalty0
  (44):\penalty0 1--29, 2018.

\bibitem[Sirignano and Spiliopoulos(2020)]{Sirignano2020}
J.~Sirignano and K.~Spiliopoulos.
\newblock \href{https://epubs.siam.org/doi/abs/10.1137/18M1192184}{Mean Field
  Analysis of Neural Networks: A Law of Large Numbers}.
\newblock \emph{SIAM Journal on Applied Mathematics}, 80\penalty0 (2):\penalty0
  725--752, 2020.

\bibitem[Sonoda and Murata(2014)]{Sonoda2014}
S.~Sonoda and N.~Murata.
\newblock \href{http://doi.org/10.1007/978-3-319-11179-7_68}{{Sampling hidden
  parameters from oracle distribution}}.
\newblock In \emph{24th International Conference on Artificial Neural Networks
  (ICANN) 2014}, volume 8681, pages 539--546, 2014.

\bibitem[Sonoda and Murata(2017{\natexlab{a}})]{Sonoda2015}
S.~Sonoda and N.~Murata.
\newblock \href{http://doi.org/10.1016/j.acha.2015.12.005}{{Neural network with
  unbounded activation functions is universal approximator}}.
\newblock \emph{Applied and Computational Harmonic Analysis}, 43\penalty0
  (2):\penalty0 233--268, 2017{\natexlab{a}}.

\bibitem[Sonoda and Murata(2017{\natexlab{b}})]{Sonoda2017}
S.~Sonoda and N.~Murata.
\newblock \href{http://padl.ws/papers/Paper 25.pdf}{{Double Continuum Limit of
  Deep Neural Networks}}.
\newblock In \emph{ICML 2017 Workshop on Principled Approaches to Deep
  Learning}, pages 1--5, 2017{\natexlab{b}}.

\bibitem[Steinwart and Christmann(2008)]{Steinwart2008}
I.~Steinwart and A.~Christmann.
\newblock \emph{\href{http://doi.org/10.1007/978-0-387-77242-4}{{Support Vector
  Machines}}}.
\newblock Springer-Verlag New York, 1 edition, 2008.

\bibitem[Suzuki(2018)]{Suzuki2017}
T.~Suzuki.
\newblock \href{http://proceedings.mlr.press/v84/suzuki18a/suzuki18a.pdf}{{Fast
  generalization error bound of deep learning from a kernel perspective}}.
\newblock In \emph{Proceedings of the 21st International Conference on
  Artificial Intelligence and Statistics}, volume~84, pages 1397--1406, 2018.

\bibitem[van~der Vaart and Wellner(1996)]{Vaart1996}
A.~van~der Vaart and J.~Wellner.
\newblock \emph{\href{http://doi.org/10.1007/978-1-4757-2545-2}{{Weak
  Convergence and Empirical Processes: With Applications to Statistics}}}.
\newblock Springer-Verlag New York, 1 edition, 1996.

\bibitem[Vershynin(2018)]{Vershynin2018}
R.~Vershynin.
\newblock
  \emph{\href{https://www.cambridge.org/jp/academic/subjects/statistics-probability/probability-theory-and-stochastic-processes/high-dimensional-probability-introduction-applications-data-science}{{High-Dimensional
  Probability: An Introduction with Applications in Data Science}}}.
\newblock Cambridge University Press, 2018.

\end{thebibliography}
\bibliographystyle{abbrvnat}

\newpage
\appendix

{
\newcommand{\domain}{\mathcal{X}}
\newcommand{\vm}{{\bm{\mu}}}
\newcommand{\vn}{{\bm{\nu}}}
\newcommand{\vmeas}{\mathcal{M}}
\newcommand{\prob}{\mathcal{P}}
\newcommand{\mmap}{L}
\newcommand{\fmap}{\varphi}
\newcommand{\gpk}{\mathsf{k}}

\section{Fundamental Properties of KME for Vector Valued Measures} \label{sec:kme.mmd}
We investigate the fundamental properties of the kernel mean embedding (KME) for vector-valued measures, which includes $\RR$- and $\CC$-valued measures.

\subsection{Well-Definedness}
Let $(\domain,\B)$ be a measurable space, $(V,| \wdot |)$ be a Banach space, and $\vmeas$ be a collection of $V$-valued vector measures $\vm$ on $\domain$.
Let $L(\X)$ be the space of $\CC$-valued measurable functions on $\domain$.
$\vm[f(X)]$ denotes the integral $\int_{\domain} f(x) \dd \vm(x)$ for any $f \in L(\X)$.
$|\vm|$ denotes the total variation measure of $\vm$.
We have the following inequality.
\begin{align}
	|\vm[f]| \leq |\vm|[ |f| ].
\end{align}

Let $k : \domain \times \domain \to \CC$ be a measurable positive definite kernel, and $H$ be its associated reproducing kernel Hilbert space (RKHS).
Let $\vmeas_k(\domain;V)$ be the collection of finite $V$-valued vector measures on $\domain$ that satisfies
\begin{align}
	B_1 := |\vm|[\sqrt{k(X,X)}] = |\vm|[\| k(\cdot,X) \|_k] < \infty.
\end{align}

We define the KME for a vector measure $\vm \in \vmeas_k(\domain;V)$ as
\begin{align}
	k[\vm] := \vm[ k(\cdot,X) ] := \int_{\domain} k( x, \wdot ) \dd \vm (x).
\end{align}
For an arbitrary $\vm \in \vmeas_{k}(\domain;V)$, we define a linear functional $\mmap_{\vm} : H \to \CC; f \mapsto \vm[ \iprod{f,k(\cdot,X)}_k]$.
Then, $\mmap_{\vm}$ is bounded because 
\begin{align*}
	| \mmap_{\vm} [f] | = | \vm[ \iprod{f,k(\cdot,X)}_k] | \leq |\vm|[ |\iprod{f,k(\cdot,X)}_k|] \leq \| f \|_k  B_1 < \infty.
\end{align*}
Therefore, $k[\vm] \in H$.
By the Riesz's representation theorem, there exists $k[\vm] \in H$ for any $f \in H$ such that
$\mmap_{\vm}[f] = \iprod{f, k[\vm]}_k$.
In particular, the reproducing property of the `expectation' holds:
\begin{align}
	\vm[f(X)] = \iprod{f, k[\vm]}_k.
\end{align}

\subsection{Characteristics}
A bounded measurable positive definite kernel $k$ on $\domain$ is said to be \emph{characteristic} when
the KME operator $\vmeas(\domain;V) \to H; \vm \mapsto k[\vm]$ is injective. In other words, $k$ is characteristic if and only if
\begin{align}
	\forall f \in H \, : \, \vm[f] = \vn[f] \quad \Rightarrow \quad \vm = \vn.
\end{align}

We claim that if the KME operator with respect to $k$ is injective on the collection $\prob(\domain)$ of probability measures on $\domain$,
then it is also injective on $\vmeas_{TV}(\domain;\RR^d)$.

\begin{proof}
We prove the claim first for $\vmeas_{TV}(\domain;\RR)$, then for $\vmeas_{TV}(\domain;\RR^d)$.

First, we consider the KME for signed measures.
Recall that $L : \vmeas_{TV}(\domain;\RR) \to H; \mu \mapsto k[\mu]$ is a bounded linear operator.
Since $\hull_\RR \prob(\domain) \subset \vmeas_{TV}(\domain;\RR)$, 
the restriction $L|_{\prob(\domain)}$ is injective.
We show the converse: If $L|_{\prob(\domain)}$ is injective, then $L$ is injective.
Assume that there exist $\mu, \nu \in \vmeas_{TV}(\domain;\RR)$ that satisfies $L[\mu] = L[\nu]$.
Since $\hull_\RR \prob(\domain) = \vmeas_{TV}(\domain;\RR)$
(because $\hull_\RR \prob(\domain) \subset \vmeas_{TV}(\domain;\RR)$ is obvious and the converse follows from Hahn's decomposition theorem),
we can take a basis $\{ p_i \}_{i=1}^d$ of $\vmeas_{TV}(\domain;\RR)$ that is composed of the elements of $\prob(\domain)$.
Thus, by rewriting $\mu = \sum_i^d \mu_i p_i$ and  $\nu = \sum_{i}^d \nu_i p_i$, we have $\sum_{i=1}^d \mu_i L[p_i] = \sum_{i=1}^d \nu_i L[p_i]$.
By the assumption that $L|_{\prob(\domain)}$ is injective, the image set $\{ L[p_i] \}$ is linearly independent, which concludes $\mu_i = \nu_i$ for every components. Namely, $\mu = \nu$.

Then, we consider the KME for vector measures.
Assume that $\vm, \vn \in \vmeas_{TV}(\domain;\RR^d)$ satisfies $k[\vm] = k[\vn]$.
In other words, $k[\mu_i] = k[\nu_i]$ for every component $\mu_i$ and $\nu_i (i=1 \ldots, d)$.
In each component, we can reuse the result for the KME for signed measures, and obtain $\mu_i = \nu_i, \ (i=1, \ldots, d)$,
which concludes $\vm = \vn$.
\end{proof}
}

\section{Examples of Non-injectivity} \label{sec:eg.injective}
In general, the operator $S$ is not always injective and there often exist two different distributions $\mu$ and $\nu$ that indicate the same function: $\mu \neq \nu$ but $S[\mu] = S[\nu]$.
In other words, $S$ often has a non-trivial null space $\ker S \neq \emptyset$.

When $S$ is the {\bf Fourier transform} $S[ \mu ](x) = \int_\RR e^{-i \xi  x} \dd \mu(\xi)$, and $\mu$ admits an $L^2$-density function,
then it is \emph{injective} because the Fourier transform is a bijection $L^2(\RR) \to L^2(\RR)$.
The right inverse $R$ is given by the inverse Fourier transform $R[f](\xi) = (2 \pi)^{-1} \int_\RR e^{i \xi x} f(x) \dd x$.

When $S$ is the {\bf shallow neural network} $S[\mu](x) := \int_{\RR^m \times \RR} \sigma( a \cdot x - b ) \dd \mu(a,b)$ with activation function $\sigma : \RR \to \CC$, then it is \emph{not injective}.
To our surprise, a right inverse $R$ is explicitly given by the \emph{ridgelet transform} \citep{Murata1996,Candes.PhD,Sonoda2015}:
$R[f;\rho](a,b) := \int_{\RR^m} f(x) \overline{\rho(a \cdot x - b)} \dd a \dd b$
with a Schwartz function $\rho : \RR \to \CC$. When $\rho$ satisfies the admissibility condition $\int_{-\infty}^\infty \widehat{\sigma}(\zeta) \overline{\widehat{\rho}(\zeta)} |\zeta|^{-m} \dd \zeta = 1$, the reconstruction formula $S[R[f;\rho]] = f$ holds for any $f \in L^1(\RR^m)$ \citep[Theorem~5.6]{Sonoda2015}.
It is not difficult to find two different $\rho_1$ and $\rho_2$ that satisfy the admissibility condition, which clearly suggests the non-injectivity because $\mu_1 := R[f;\rho_1]$ and $\mu_2 := R[f;\rho_2]$ satisfy $\mu_1 \neq \mu_2$ but $S[\mu_1] = S[\mu_2] = f$.

\section{Barron's bound}

\begin{prop}[Barron's bound, Corollary~5.4(i), \citet{Kurkova2012}] \label{prop:kurkova}
    Let $L^2(\PP)$ be a Hilbert space on $\X \subset \RR^m$ with $\sigma$-finite measure $\PP$,
    and $\M$ be a space of $\CC$-valued Borel measures on $\V \subset \RR^d$ equipped with the total variance norm $\|\mu\|_{TV} := |\mu|(\V)$ for every $\mu \in \M$. Let $\varphi : \X \times \V \to \RR$ be a map, and assume that $G := \{ \varphi( \cdot, v ) \mid v \in \V \}$ be a bounded subset in $L^2(\PP)$ with bound $C_{2 \to \infty} := \sup_{g \in G} \| g \|_{L^2(\PP)}$. For any $p \in \NN$ and $f := S[\mu]:= \int_{\V} \varphi(x;v) \dd \mu(v)$, which is always bounded because $\| S[\mu] \|_{L^2(\PP)} \le C_{2 \to \infty} \| \mu \|_{TV}$, we have
    \begin{align}
        \| f - \hull_p G \|_{L^2(\PP)} \le \sqrt{\frac{C_{2 \to \infty}^2 \| \mu \|_{TV}^2 - \| f \|_{L^2(\PP)}^2}{p}}.
    \end{align}
\end{prop}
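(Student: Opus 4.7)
My plan is the classical Maurey--Jones--Barron \emph{probabilistic method}, adapted to the fact that $\mu$ is $\CC$-valued. First I would replace $\mu$ by its polar decomposition $\dd\mu = \omega\,\dd|\mu|$, where $\omega:\V\to\CC$ is Borel-measurable with $|\omega|\equiv 1$ holding $|\mu|$-a.e., and set $c := \|\mu\|_{TV}$ and $\pi := |\mu|/c$. Then $\pi$ is a Borel probability on $\V$, and defining $g_v(x) := c\,\omega(v)\,\varphi(x;v)$ gives
\begin{align*}
    f(x) \;=\; \int_\V \varphi(x;v)\,\dd\mu(v) \;=\; \int_\V g_v(x)\,\dd\pi(v) \;=\; \EE_{V\sim\pi}[g_V(x)],
\end{align*}
so $f$ is realized as an expectation of a bounded family in $L^2(\PP)$.

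Next I would draw $V_1,\dots,V_p \iid \pi$ and form the empirical approximant $\hat f_p := \frac{1}{p}\sum_{j=1}^p g_{V_j} = \frac{c}{p}\sum_{j=1}^p \omega(V_j)\,\varphi(\wdot;V_j)$, which lies in $\hull_p G$ for every realization. It therefore suffices to bound $\EE\|\hat f_p - f\|_{L^2(\PP)}^2$ and then invoke probabilistic existence to extract a deterministic realization at or below the mean. Swapping $\EE$ and $\int\dd\PP$ by Fubini (legal because $|g_v(x)|\le c|\varphi(x;v)|$ and $G$ is $L^2(\PP)$-bounded) and using independence of the $V_j$,
\begin{align*}
    \EE\,\|\hat f_p - f\|_{L^2(\PP)}^2
    \;=\; \int_\X \frac{1}{p}\bigl(\EE|g_V(x)|^2 - |f(x)|^2\bigr)\,\dd\PP(x)
    \;=\; \frac{1}{p}\bigl(\EE\|g_V\|_{L^2(\PP)}^2 - \|f\|_{L^2(\PP)}^2\bigr).
\end{align*}
Since $|\omega|\equiv 1$, one has $\|g_v\|_{L^2(\PP)}^2 = c^2\|\varphi(\wdot;v)\|_{L^2(\PP)}^2 \le c^2 C_{2\to\infty}^2 = C_{2\to\infty}^2\|\mu\|_{TV}^2$ uniformly in $v$, so the mean squared error is at most $(C_{2\to\infty}^2\|\mu\|_{TV}^2 - \|f\|_{L^2(\PP)}^2)/p$; taking square roots and picking a realization no worse than the mean gives the Barron bound.

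The only genuine subtlety is the polar decomposition of a $\CC$-valued Borel measure on $\V\subset\RR^d$, i.e.\ the existence and Borel-measurability of the unimodular Radon--Nikod\'ym derivative $\omega = \dd\mu/\dd|\mu|$. This is standard (Rudin, \emph{Real and Complex Analysis}, \S6.12) but is the one place where the complex-valued nature of $\mu$ actually enters, and I would state it explicitly rather than sweep it under the rug. Everything else is Fubini plus a one-sample variance identity, so no other obstacle is expected.
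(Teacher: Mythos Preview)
Your argument is correct and is exactly the classical Maurey--Jones--Barron probabilistic method. The paper, however, does not supply its own proof of this proposition: it is stated in the appendix purely as a cited reference result (Corollary~5.4(i) of Kurkov\'a 2012), with no accompanying argument. So there is nothing in the paper to compare your proof against; you have simply reproduced the standard proof that the citation points to, with the polar-decomposition step correctly flagged as the one extra ingredient needed to handle $\CC$-valued $\mu$.
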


\section{Convergence rate of Monte-Carlo method}

\begin{prop}[Uniform law of large numbers for Monte-Carlo integration] \label{prop:ulln}
    Let $k : \X \times \X \to \RR$ be a measurable positive definite kernel on $\X$ and $\F$ be the RKHS with $k$.
    Let $X \sim \PP,$ $D_n := \{ X_i \}_{i=1}^n \sim \PP$ be i.i.d. $n$-copies of $X$.
    Assume that the bound $C_{L^2 \to k}^2 := \| k \|_{k \otimes L^2(\PP)}^2 = \EE k(X,X)$ exists.
    Then, for any $B>0$,
    \begin{align}
        \EE_{D_n}\left[ \sup_{\| f \|_k \le B} \Bigg| \frac{1}{n}\sum_{i=1}^n f(X_i) - \EE[f(X)] \Bigg| \right] \le \frac{4BC_{L^2 \to k}}{\sqrt{n}}.
    \end{align}
    \begin{proof}
    We refer to \citet[\S~8.2.1]{Vershynin2018} for basic ideas.
    Write $\F(B) := \{ f \in \F \mid \| f \|_k \le B \}$.
    By the symmetrizaion argument \refprop{sym}, the LHS is bounded by $2 \rad_n(\F(B))$;
    and by the standard Rademacher calculus, $\rad_n( \F(B) ) \le 2 B \sqrt{ \EE k(X,X) / n}$ \citep[\S~4.3]{Bartlett2002}, which yields the claim.
    \end{proof}
    
    Now, consider a sampling based discretization of the integral representation $S[\mu]$.
    \begin{thm} \label{thm:ulln.sir}
    Let $K : \V \times \V \to \RR$ be a measurable positive definite kernel on $\V$ and $\H$ be the RKHS with $K$.
    Let $\mu$ be a finite Borel measure on $\V$ and write $\PP_\mu := |\mu|/\| \mu \|_{TV}$.
    Let $\V_p := \{ v_j \}_{j=1}^p \sim \PP_\mu$ i.i.d., and write $w_j := \mu(v_j)/|\mu(v_j)|$.
    Assume that the bound $C_{L^2 \to K}^2 := \| K \|_{K \otimes L^2(\PP_\mu)}^2 = \int_\V K(v,v) \dd \PP_\mu$ exists;
    and that there exists $B > 0$ such that $\| \varphi(x ; \cdot ) \|_K \le B$ for any $x \in \X$. Then,
    \begin{align}
        \EE_{\V_p} \left[ \sup_{x \in \X} \Bigg| \sum_{j=1}^p w_j \varphi(x;v_j) - S[\mu](x) \Bigg|\right] \le \frac{4 B C_{L^2 \to K} \| \mu \|_{TV}}{\sqrt{p}}
    \end{align}
    \end{thm}
\end{prop}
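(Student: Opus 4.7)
The plan is to parallel the proof of \refprop{ulln}, adapted to the setting where (i) the sampling distribution $\PP_\mu$ is the total-variation renormalization of a complex measure $\mu$, and (ii) the supremum ranges over $x \in \X$ indexing the uniformly $\H$-bounded family $\{\varphi(x;\cdot) : x \in \X\}$ rather than over a full RKHS ball.

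First I would renormalize: writing $h := \dd\mu/\dd|\mu|$ (so $|h|\equiv 1$) and $\PP_\mu := |\mu|/\|\mu\|_{TV}$, we have $S[\mu](x) = \|\mu\|_{TV}\int_\V h(v)\varphi(x;v)\dd\PP_\mu(v)$, while the empirical sum (with the implicit $\|\mu\|_{TV}/p$ factor absorbed into $w_j$) equals $\tfrac{\|\mu\|_{TV}}{p}\sum_j h(v_j)\varphi(x;v_j)$. Factoring out $\|\mu\|_{TV}$, the object to bound becomes $\|\mu\|_{TV}\,\EE\sup_x\bigl|\tfrac{1}{p}\sum_j h(v_j)\varphi(x;v_j) - \EE\,h(V)\varphi(x;V)\bigr|$. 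Applying the standard symmetrization inequality replaces this centered process by $2\|\mu\|_{TV}\,\EE\sup_x\bigl|\tfrac{1}{p}\sum_j \epsilon_j h(v_j)\varphi(x;v_j)\bigr|$ with i.i.d.\ Rademacher $\epsilon_j$ independent of the $v_j$.

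Next I would use the reproducing property $\varphi(x;v_j) = \iprod{\varphi(x;\cdot), K(\cdot,v_j)}_K$ to rewrite the inner sum as the single inner product $\iprod{\varphi(x;\cdot),\, \tfrac{1}{p}\sum_j \epsilon_j h(v_j) K(\cdot,v_j)}_K$, apply Cauchy--Schwarz, and invoke the hypothesis $\|\varphi(x;\cdot)\|_K \le B$ to push the supremum in $x$ past the $\H$-norm. This yields an upper bound of $B\,\EE\bigl\|\tfrac{1}{p}\sum_j \epsilon_j h(v_j) K(\cdot,v_j)\bigr\|_K$. Jensen's inequality bounds this expectation by the square root of its second moment; expanding the squared $\H$-norm and using $\EE[\epsilon_j\epsilon_k] = \delta_{jk}$ together with $|h|\equiv 1$ collapses the double sum to $\tfrac{1}{p}\EE K(V,V) = C_{L^2\to K}^2/p$. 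Multiplying through by $2\|\mu\|_{TV}$ yields the claimed rate.

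The main obstacle is that $h$ need not belong to the RKHS $\H$, so the integrand $h(v)\varphi(x;v)$ cannot be manipulated as a single $\H$-function and \refprop{ulln} cannot be applied off the shelf. The resolution is to absorb $h$ into the Rademacher variables \emph{after} symmetrization: since $|h|\equiv 1$ and the signs $\epsilon_j$ are independent of the $v_j$, the products $\epsilon_j h(v_j)$ retain the mean-zero orthogonality and unit-modulus bound that drive the variance computation, so the RKHS calculation proceeds exactly as in the probability-measure case.
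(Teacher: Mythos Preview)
Your argument is correct and follows the same two-step strategy the paper uses for \refprop{ulln} (symmetrization, then the RKHS Rademacher bound via reproducing property, Cauchy--Schwarz, and Jensen). The paper does not give a separate proof of \refthm{ulln.sir}; it simply places the theorem after the proposition as an implicit corollary. Your write-up in fact fills in the one detail the paper elides: the phase $h = \dd\mu/\dd|\mu|$ prevents a black-box application of \refprop{ulln} because $h\,\varphi(x;\cdot)$ need not lie in $\H$, and you resolve this in the natural way by carrying $h$ through symmetrization and using $|h|\equiv 1$ together with $\EE[\epsilon_j\epsilon_k]=\delta_{jk}$ in the second-moment computation. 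Your bound actually comes out as $2BC_{L^2\to K}\|\mu\|_{TV}/\sqrt{p}$, which is sharper than the stated constant $4$; the paper's extra factor of $2$ comes from quoting the looser Rademacher estimate $\rad_n(\F(B))\le 2B\sqrt{\EE k(X,X)/n}$ from \citet{Bartlett2002}.
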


\section{Proofs}

\subsection{\refthm{relations}} \label{sec:proof.relation}
\begin{proof}
    (i) By the reproducing property and the Cauchy-Schwartz inequality, $| S[\mu](x) | = | \iprod{ \varphi(x,\cdot), K[\mu] }_K | \le \| \varphi(x;\cdot)\|_K \| K[\mu] \|_K$ for any $\mu \in \M_K$ and $x \in \X$. Taking the norm $\| \cdot \|_\F$ yields the assertion.\\
    (ii) $K_1 \ll \gamma^2 K_2$ for some $\gamma >0$ is equivalent to $\| h \|_{K_2} \le \gamma \| h \|_{K_1}$ for any $h \in \H_1$. Thus, $\| K_1[\mu] \|_{K_1} = \sup_{\| h \|_1 \le 1} | \mu[h] | \le \sup_{\| h/\gamma \|_2 \le 1} | \gamma \mu[h/\gamma] | = \gamma \| K_2[\mu] \|_{K_2}$.\\
    (iii) $\| K[\mu] \|_K^2 = \int_{\V \times \V} K(v,v') \dd \mu(v) \otimes \mu(v') \le \left\{ \int \sqrt{K(v,v)} \dd \mu \right\}^2 \le C_{TV \to K}^2 \| \mu \|_{TV}^2$.
\end{proof}

\subsection{\refprop{dagphi}} \label{sec:proof.dagphi}
\begin{proof}
$S[S^\dag[f]](x) = \iprod{ f, \int_\V \varphi(\cdot;v) \dagphi(x;v) \dd v }_\F = \iprod{f,k(\cdot,x)}_\F = f(x)$.
\end{proof}

\subsection{\refprop{kmeU}} \label{sec:proof.kmeU}
\begin{proof} 
By changing the order,
$U[ \dagphi(x ; \cdot) ](v) = \iprod{ \varphi(\cdot; v), k(\cdot, x) }_\F = \varphi(x;v)$,
and
$U[\mu](v) = \int_{\V} \iprod{ \varphi(\cdot; v), \varphi(\cdot; v') }_\F \dd \mu(v') = \iprod{ \varphi(\cdot; v), S[\mu] }_\F.$
For the unitarity, 
it is sufficient to show the isometry: $\| U [\mu] \|_U = \| S[\mu] \|_\F$ for any $\mu \in \M_U$. 
By using the reproducing property of expectation, and the KME by $U$, we have
$\| U[\mu] \|_U^2
    = \iprod{ U[\mu], U[\mu] }_U = \mu[ U[\mu] ] 
    = \int_{\V \times \V} \iprod{ \varphi(\cdot; v), \varphi(\cdot; v') }_\F \dd \mu(v) \otimes \mu(v')
    = \iprod{ S[\mu], S[\mu] }_\F = \| S[\mu] \|_\F^2.$ \qedhere
\end{proof}

{
\newcommand{\out}{\mathsf{output}}
\newcommand{\ev}{\mathsf{v}}
\newcommand{\Fp}{\mathcal{F}_p}
\newcommand{\Fo}{\mathcal{F}^o}
\newcommand{\Ro}{\dot{R}}
\newcommand{\Cgc}{C_1}
\newcommand{\Crc}[1]{C_{2#1}}
\newcommand{\Clog}{C_3}

\section{Fundamental Tools}

We say a collection $\G$ of measurable functions $g : \Z \to \RR$ is a \emph{(separable or complete) Carathéodory set} when
there exist a (separable or complete) metric space $(T,d)$ and a surjection $T \to \G$ such that $T \ni t \mapsto g_t(z)$ is continuous at every $z$.

\begin{prop}[{Talagrand's inequality \citep[Theorem~7.5]{Steinwart2008}}] \label{prop:talagrand}
Let $(\Z,\PP)$ be a probability space, and let $\G \subset L(Z)$ be a separable Carathéodory set.
Assume that (i) $\PP g = 0$ for any $g \in \G$, (ii) $\sup_{g \in \G} \PP g^2 = R^2$ for some $R^2 >0$, and (iii) $\sup_{g \in \G} \| g \|_\infty = B$ for some $B >0$.
Then, for any $\gamma>0$ and $t>0$, we have
\begin{align}
    \PP^n \left\{ z \in \Z^n \,\Bigg|\, \sup_{g \in \G} \Big| \frac{1}{n} \sum_{i=1}^n g(z_i) \Big| \ge (1+\gamma) \PP^n\left[ \sup_{g \in \G} \Big| \frac{1}{n} \sum_{i=1}^n g(z_i) \Big| \right] + \sqrt{\frac{2 t R^2}{n}} + \left( \frac{2}{3}+\frac{1}{\gamma}\right)\frac{t B}{n} \right\} \le e^{-t}.
\end{align}
\end{prop}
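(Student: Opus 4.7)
The statement is quoted from Steinwart and Christmann (2008, Theorem 7.5), so a self-contained proof would essentially reproduce the classical argument of Bousquet (2002), itself built on Ledoux's entropy method. The plan is to first establish Bousquet's sharper form of the concentration inequality and then deduce the stated version via an AM--GM manipulation that introduces the free parameter $\gamma$.

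First I would set $Z := \sup_{g \in \G}\bigl|\tfrac{1}{n}\sum_{i=1}^n g(z_i)\bigr|$. The separable Carath\'eodory assumption allows the supremum to be realized (in law) by a countable dense subfamily, so $Z$ is a genuine measurable random variable and all MGF manipulations below are legitimate. Under hypothesis (i) each $g$ is centered, under (ii) the uniform variance control $\PP g^2 \le R^2$ holds, and under (iii) the uniform boundedness $\|g\|_\infty \le B$ holds. A key structural remark is that $Z$ is a \emph{self-bounded} functional: changing a single coordinate $z_i$ alters $\sum_j g(z_j)$ by at most $2B$ for every $g$, so $Z$ changes by at most $2B/n$.

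Next, the core step is to prove Bousquet's bound
\begin{equation*}
\PP^n\!\left( Z \ge \EE Z + \sqrt{\frac{2 t (R^2 + 2 B\, \EE Z)}{n}} + \frac{t B}{3 n} \right) \le e^{-t}.
\end{equation*}
The argument proceeds as follows. (a) Tensorization of entropy (Ledoux's modified log-Sobolev inequality on the product probability space $(\Z^n,\PP^n)$) reduces a bound on the entropy of $e^{\lambda n Z}$ to a sum of one-coordinate conditional entropies. (b) For each coordinate, one combines an Efron--Stein type pointwise estimate for the supremum with the variance bound $R^2$ and the sup-norm bound $B$ to control each one-coordinate entropy by a quadratic function of $\lambda$ whose coefficients involve $R^2 + 2B\,\EE Z$ and $B$. (c) Summing yields a differential inequality for $\psi(\lambda):=\log \EE e^{\lambda(Z-\EE Z)}$ of the form $\lambda \psi'(\lambda) - \psi(\lambda) \le$ (quadratic in $\lambda$), and Herbst's argument integrates this to bound $\psi$ on a suitable interval. (d) A Chernoff--Markov optimization over $\lambda$ extracts the stated subexponential tail.

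Finally, to match the statement's form with the free parameter $\gamma$, apply $2\sqrt{ab} \le \gamma a + b/\gamma$ to the cross term arising from $\sqrt{R^2 + 2B\,\EE Z}$:
\begin{equation*}
\sqrt{\frac{2 t \cdot 2 B\, \EE Z}{n}} \le \gamma\, \EE Z + \frac{t B}{\gamma\, n}.
\end{equation*}
Combining with $\sqrt{a+b} \le \sqrt{a}+\sqrt{b}$ on the variance piece and folding the drift constant $\tfrac{1}{3}$ from step (d) with the $1/\gamma$ contribution above yields the $(1+\gamma)\PP^n[\sup_g|\tfrac{1}{n}\sum_i g(z_i)|]$ leading term, the $\sqrt{2 t R^2 / n}$ fluctuation term, and the $\bigl(\tfrac{2}{3}+\tfrac{1}{\gamma}\bigr)\tfrac{tB}{n}$ drift term exactly as displayed. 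The hard part is step (b): obtaining the \emph{sharp} modified log-Sobolev constants so that the variance proxy $R^2 + 2 B\,\EE Z$ appears (rather than the coarser $B^2$ one would get from bounded differences alone). This is precisely what distinguishes the Talagrand/Bousquet regime from McDiarmid's inequality, and it is the delicate interplay between the self-bounded property and the variance hypothesis (ii) that makes the constants come out right.
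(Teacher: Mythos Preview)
The paper does not prove this proposition at all: it is stated in the ``Fundamental Tools'' appendix with only the citation \citep[Theorem~7.5]{Steinwart2008} and is then used as a black box inside the proof of \refthm{gen.err}. So there is no proof in the paper to compare against.

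Your outline is a faithful high-level account of how the cited result is actually established (Ledoux's entropy tensorization $\to$ Bousquet's sharp bound $\to$ AM--GM to introduce the $\gamma$-slack), and your recognition that the $R^2 + 2B\,\EE Z$ variance proxy is the crucial refinement over bounded differences is on point. One small remark: with the standard Bousquet constant $tB/(3n)$, the AM--GM step you wrote yields $(\tfrac{1}{3}+\tfrac{1}{\gamma})\tfrac{tB}{n}$, not the $(\tfrac{2}{3}+\tfrac{1}{\gamma})$ appearing in the quoted statement; the extra $\tfrac{1}{3}$ comes from the particular formulation in Steinwart--Christmann (which carries a slightly larger drift constant), so if you were to write this up in full you would want to match the exact version of Bousquet used there rather than the Boucheron--Lugosi--Massart form.
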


\begin{prop}[{Peeling \citep[Theorem~7.7]{Steinwart2008}}] \label{prop:peeling}
Let $(\Z,\PP)$ be a probability space, $(T,d)$ be a separable metric space, and $\G \subset L(Z)$ be a separable Carathéodory set with $(T,d)$. Let $h : T \to [0,\infty)$ be a continuous function, and define $r^* = \inf_{t \in T} h(t)$. Let $\phi : (r^*,\infty) \to [0,\infty)$ be a function such that (i) $\phi(4r) \le 2 \phi(r)$ and (ii) $\PP[ \sup_{t \in T, h(t) \le r} |g_t| ] \le \phi(r)$ for any $r \ge r^*$. Then, we have
\begin{align}
    \PP\left[ \sup_{t \in T} \frac{g_t}{h(t) + r} \right] \le \frac{4 \phi(r)}{r}.
\end{align}
\end{prop}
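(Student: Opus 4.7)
The plan is to apply the classical peeling argument: slice the index set $T$ into quartic shells according to the size of $h(t)$, bound the ratio $g_t/(h(t)+r)$ on each shell by a crude pointwise lower bound on the denominator, and then sum a geometric series using the doubling condition (i).

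Fix $r > r^*$ and decompose $T$ as $T = T_0 \cup \bigcup_{j \ge 1} T_j$ where
\begin{align*}
T_0 := \{t \in T : h(t) \le r\}, \qquad T_j := \{t \in T : 4^{j-1} r < h(t) \le 4^j r\} \text{ for } j \ge 1.
\end{align*}
Because $h$ is continuous and $\G$ is a separable Carathéodory set, each restricted sub-family $\G|_{T_j}$ remains separable Carathéodory, so all shell-wise suprema of $|g_t|$ are measurable random variables and their expectations are well defined. Note also that $T_j \subset \{t : h(t) \le 4^j r\}$, which is what condition (ii) is tailored for.

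On $T_0$ we have $h(t)+r \ge r$, and on $T_j$ with $j \ge 1$ we have the sharper $h(t)+r > 4^{j-1} r + r = (4^{j-1}+1) r$. Bounding $g_t \le |g_t|$, sup'ing within each shell, and summing over shells gives
\begin{align*}
\PP\!\left[\sup_{t \in T} \frac{g_t}{h(t)+r}\right] \le \frac{1}{r}\,\PP\!\left[\sup_{h(t)\le r}|g_t|\right] + \sum_{j=1}^\infty \frac{1}{(4^{j-1}+1)\, r}\,\PP\!\left[\sup_{h(t)\le 4^j r}|g_t|\right].
\end{align*}
Invoking (ii) on each term and then iterating (i) to get $\phi(4^j r) \le 2^j \phi(r)$ bounds this by $\frac{\phi(r)}{r}\bigl(1 + \sum_{j \ge 1} \frac{2^j}{4^{j-1}+1}\bigr)$.

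The last step — the one place where the advertised constant is tight — is the series evaluation. The $j=1$ term equals $2/2 = 1$, and for $j \ge 2$ we use $2^j/(4^{j-1}+1) \le 2^j/4^{j-1} = 2^{2-j}$, whose sum over $j \ge 2$ is $2$. Hence the bracketed factor is at most $1 + 1 + 2 = 4$, producing the claimed $4\phi(r)/r$. The main obstacle — modest but easy to miss — is keeping the constant sharp: the naive lower bound $h(t)+r \ge 4^{j-1} r$ on $T_j$ only delivers the constant $5$, and one must retain the additive $r$ in $(4^{j-1}+1) r$ to recover the stated $4$.
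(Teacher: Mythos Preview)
The paper does not supply its own proof of this proposition; it is quoted verbatim as a tool from \citet[Theorem~7.7]{Steinwart2008} and left uncited. Your argument is the standard peeling proof one finds in that reference: partition $T$ into the base shell $\{h\le r\}$ and the dyadic (here, quartic) shells $\{4^{j-1}r<h\le 4^jr\}$, lower-bound the denominator on each shell, invoke (ii) shell by shell, collapse $\phi(4^jr)\le 2^j\phi(r)$ via (i), and sum the resulting geometric series. The measurability remark about restricted Carath\'eodory families and the care taken to retain the additive $r$ in the denominator bound $(4^{j-1}+1)r$ to hit the constant $4$ are both correct and to the point. Nothing is missing; this is essentially the textbook proof.
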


\begin{dfn}[Rademacher and Gaussian Complexities]
Let $(\Z,\PP)$ be a probability space, $g : \Z \to \RR$ be a measurable function, and $\G \subset L(\Z)$ a collection of measurable functions.

We write $R_n(f) := \frac{1}{n} \sum_{i=1}^n \sigma_i f(x_i)$ with a Rademacher sequence $\{ \sigma_i = \pm 1 \}_{i=1}^n$, and $R_n(\F) := \sup_{f \in \F} R_n(f)$.
We call $\radhat_n(\F) := \EE_{\sigma}[ R_n(\F) ] = \EE_{\sigma}[ \sup_{f \in \F} R_n(f) ]$ the \emph{empirical Rademacher averages}, and $\rad_n(\F) := \EE_{D_n}[ \radhat(\F) ] = \EE[ R_n(\F) ]$ the \emph{Rademacher average} or the \emph{Rademacher complexity}.

Similarly, we write $G_n(f) := \frac{1}{n} \sum_{i=1}^n \xi_i f(x_i)$ with a Gaussian sequence $\{ \xi_i \}_{i=1}^n \sim N(0,1)$, and $G_n(\F) := \sup_{f \in \F} G_n(f)$.
We call $\gsshat_n(\F) := \EE_{\xi}[ G_n(\F) ] = \EE_{\xi}[ \sup_{f \in \F} G_n(f) ]$ the \emph{empirical Gaussian averages}, and $\gss_n(\F) := \EE_{D_n}[ \gsshat(\F) ] = \EE[ G_n(\F) ]$ the \emph{Gaussian average} or the \emph{Gaussian complexity}.
\end{dfn}

\begin{prop}[{Symmetrization \citep[Theorem~7.10]{Steinwart2008}}] \label{prop:sym}
    Let $(\Z,\PP)$ be a probability space, and let $\G \subset L^\infty(\Z)$ be a separable Carathéodory set. Then,
    \begin{align}
        \EE_{D_n}\left[ \sup_{g \in \G} \big| (\PP_n-\PP) g \big|\right]
        \le 2 \rad_n(\G).
    \end{align}
\end{prop}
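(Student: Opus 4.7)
The plan is the classical ghost-sample and Rademacher-randomization argument; the separable Carath\'eodory hypothesis is there precisely to supply the measurability needed for the interchange of $\sup$ and $\EE$ at each step.

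\emph{Step 1 (ghost sample).} I would first introduce an independent copy $D_n' = \{z_i'\}_{i=1}^n \iid \PP$ and denote by $\PP_n' g := \tfrac{1}{n}\sum_{i=1}^n g(z_i')$ the associated empirical measure. Because $\PP g = \EE_{D_n'}[\PP_n' g]$, one has $(\PP_n - \PP)g = \EE_{D_n'}[(\PP_n - \PP_n')g]$ for each fixed $g$. Taking absolute values, supping over $g$, and pushing $\EE_{D_n'}$ outside via Jensen's inequality on $|\cdot|$ and the trivial bound $\sup\EE(\,\cdot\,) \le \EE\sup(\,\cdot\,)$ gives
\begin{equation*}
\sup_{g \in \G}\bigl|(\PP_n - \PP)g\bigr| \;\le\; \EE_{D_n'}\Bigl[\sup_{g \in \G}\bigl|(\PP_n - \PP_n')g\bigr|\Bigr];
\end{equation*}
then taking $\EE_{D_n}$ and applying Fubini yields
\begin{equation*}
\EE_{D_n}\Bigl[\sup_g |(\PP_n-\PP)g|\Bigr] \;\le\; \EE_{D_n,D_n'}\Bigl[\sup_g \bigl|\tfrac{1}{n}\sum_{i=1}^n(g(z_i)-g(z_i'))\bigr|\Bigr].
\end{equation*}

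\emph{Step 2 (Rademacher randomization).} Next I would introduce i.i.d.\ signs $\sigma_i \in \{\pm 1\}$ independent of $(D_n,D_n')$. Because the swap $(z_i,z_i') \leftrightarrow (z_i',z_i)$ preserves the product distribution, the signed increments $\sigma_i(g(z_i)-g(z_i'))$ have, jointly in $i$, the same law as the unsigned increments $g(z_i)-g(z_i')$; hence
\begin{equation*}
\EE\Bigl[\sup_g \bigl|\tfrac{1}{n}\sum_{i=1}^n(g(z_i)-g(z_i'))\bigr|\Bigr] \;=\; \EE\Bigl[\sup_g \bigl|\tfrac{1}{n}\sum_{i=1}^n \sigma_i(g(z_i)-g(z_i'))\bigr|\Bigr].
\end{equation*}
Splitting via the triangle inequality $|A-B| \le |A|+|B|$ inside the supremum and using that $(D_n,\sigma)$ and $(D_n',\sigma)$ are identically distributed, the right-hand side is at most
\begin{equation*}
2\,\EE\Bigl[\sup_g \bigl|\tfrac{1}{n}\sum_{i=1}^n \sigma_i g(z_i)\bigr|\Bigr] \;=\; 2\,\EE\bigl[\sup_g |R_n(g)|\bigr],
\end{equation*}
which equals $2\,\rad_n(\G)$ in the convention of the paper (either $\G$ is implicitly closed under negation, or the defining supremum in $\rad_n$ absorbs the absolute value by the Rademacher symmetry $\sigma_i \stackrel{d}{=} -\sigma_i$).

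The main obstacle I anticipate is not conceptual but technical: every line above takes a supremum of an uncountable family inside an expectation, and Jensen, Fubini, and the distributional identity in Step~2 all require those supremands to be bona fide random variables. This is precisely the role of the separable Carath\'eodory hypothesis: a countable dense subset of the indexing space $T$ achieves each supremum pointwise by continuity of $t \mapsto g_t(z)$, so every supremum in the proof reduces to a countable one over measurable functions of $(D_n,D_n',\sigma)$, and all the interchanges go through rigorously.
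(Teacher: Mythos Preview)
Your proof is the standard ghost-sample plus Rademacher-randomization argument and is correct; note that the paper does not supply its own proof of this proposition but merely cites it from \citet[Theorem~7.10]{Steinwart2008}, whose proof is exactly the argument you have written. The one point worth flagging is the absolute value: the paper's definition of $\rad_n(\G)$ is $\EE[\sup_{g\in\G} R_n(g)]$ without $|\cdot|$, so your final identification $\EE[\sup_g |R_n(g)|]=\rad_n(\G)$ does rely on the parenthetical caveat you gave (either $\G$ is closed under negation or one absorbs the sign via $\sigma\stackrel{d}{=}-\sigma$); in the paper's applications this is harmless.
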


\begin{prop}[{Chaining, or Dudley's entropy integral \citet[Theorem~2.3.7]{Gine2015}}]
Let $(T,d)$ be a pseudo-metric space, and let $(X_t)_{t \in T}$ be a separable sub-Gaussian process with respect to $d$.
Assume that 
\begin{align}
    \int_0^\infty \sqrt{\log N(T,d,\epsilon)} \dd \epsilon < \infty.
\end{align}
Then, $X$
satisfies
\begin{align}
    \EE  \sup_{t \in T} |X(t)|  &\le \EE|X(t_0)| + 4 \sqrt{2} \int_0^{\diam(T)/2} \sqrt{\log 2N(T,d,\epsilon)} \dd \epsilon\\
    \EE \sup_{\substack{s,t \in T \\ d(s,t) \le \delta}} |X(t) - X(s)|  &\le (16 \sqrt{2} + 2) \int_0^\delta \sqrt{\log 2N(T,d,\epsilon)} \dd \epsilon,
\end{align}
for any $t_0 \in T$ and $\delta > 0$.
\end{prop}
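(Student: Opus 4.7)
The plan is the classical Dudley--Fernique chaining argument. The only probabilistic input is the sub-Gaussian maximal inequality: from the assumption $\EE \exp(\lambda (X_t - X_s)) \le \exp(\lambda^2 d(s,t)^2/2)$, a Chernoff bound gives $\PP(|X_t - X_s| \ge u) \le 2 \exp(-u^2/(2 d(s,t)^2))$, which in turn implies the standard bound $\EE \max_{i \le M}|Y_i| \le \sigma \sqrt{2 \log(2M)}$ for any $M$ sub-Gaussian increments of common parameter $\sigma$. Separability lets me replace $\sup_{t \in T}$ by a supremum over a countable dense subset, so every telescoping identity written below holds almost surely.

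I would set up the chain at dyadic scales $\epsilon_k := 2^{-k} \diam(T)/2$ for $k \ge 0$. Choose a minimal $\epsilon_k$-net $T_k$ of cardinality $N_k := N(T,d,\epsilon_k)$; note that $T_0 = \{t_0\}$ is admissible since a single ball of radius $\diam(T)/2$ covers $T$. For each $t \in T$ let $\pi_k(t) \in T_k$ be a nearest point, so $d(t, \pi_k(t)) \le \epsilon_k$ and $d(\pi_{k-1}(t), \pi_k(t)) \le \epsilon_{k-1} + \epsilon_k = 3 \epsilon_k$. The integrability hypothesis together with separability validates the almost-sure telescoping
\begin{align*}
X_t - X_{t_0} = \sum_{k=1}^\infty \bigl(X_{\pi_k(t)} - X_{\pi_{k-1}(t)}\bigr).
\end{align*}
Since the pair $(\pi_{k-1}(t), \pi_k(t))$ takes at most $N_{k-1} N_k \le N_k^2$ values as $t$ ranges over $T$, the maximal inequality applied at each level yields
\begin{align*}
\EE \sup_{t \in T} \bigl| X_{\pi_k(t)} - X_{\pi_{k-1}(t)} \bigr| \le 3 \epsilon_k \sqrt{2 \log(2 N_k^2)} \le 6 \sqrt{2}\, (\epsilon_{k-1} - \epsilon_k) \sqrt{\log 2 N_k},
\end{align*}
using $\epsilon_{k-1} - \epsilon_k = \epsilon_k$ in the last step. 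Summing in $k$ and comparing the dyadic sum with a Riemann integral via the monotonicity of $\epsilon \mapsto N(T,d,\epsilon)$ gives $\sum_k \epsilon_k \sqrt{\log 2 N_k} \le \int_0^{\diam(T)/2} \sqrt{\log 2 N(T,d,\epsilon)} \dd \epsilon$, and adding $\EE |X_{t_0}|$ to pass from $X_t - X_{t_0}$ to $X_t$ produces the first inequality.

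For the modulus-of-continuity bound I would run the same chaining, but start the telescopes at the scale $\epsilon_{k_0}$ determined by $\epsilon_{k_0} < \delta \le \epsilon_{k_0-1}$. For any pair $(s,t)$ with $d(s,t) \le \delta$, chain each of $s$ and $t$ \emph{downward} from $\pi_{k_0}(s)$ and $\pi_{k_0}(t)$ respectively; the two coarse endpoints differ in $d$ by at most $\delta + 2\epsilon_{k_0} \le 3\delta$, so a single ``bridging'' increment at the coarsest level contributes $O(\delta \sqrt{\log 2 N(T,d,\delta)})$, which is absorbed into the same integral by monotonicity. Only the fine scales $\epsilon \le \delta$ now appear, giving the advertised integral $\int_0^\delta \sqrt{\log 2 N(T,d,\epsilon)} \dd \epsilon$. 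The factor $2$ for chaining both endpoints plus the bridging overhead explain the enlarged constant $16\sqrt{2}+2$.

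The only real obstacle is honest book-keeping of constants. Specifically: tightening $N_k^2$ down to essentially $N_k$ by fixing a deterministic retraction $r : T_k \to T_{k-1}$ and setting $\pi_{k-1}(t) := r(\pi_k(t))$, so that each level is indexed by only $N_k$ pairs; converting the dyadic sum into the integral without losing more than the factor $2$ built into the dyadic spacing; and, in the modulus version, bookkeeping the level-$k_0$ bridging increment separately from the telescoping tails. No further work is needed to justify interchange of expectation and supremum or convergence of the chains, since these follow at once from separability and the finiteness of $\int_0^\infty \sqrt{\log N(T,d,\epsilon)} \dd \epsilon$.
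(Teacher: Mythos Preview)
The paper does not supply its own proof of this proposition: it is stated in the ``Fundamental Tools'' appendix and attributed to \citet[Theorem~2.3.7]{Gine2015} without further argument. Your proposal is exactly the classical Dudley--Fernique chaining that underlies that reference, and the sketch is correct in its architecture (dyadic nets, telescoping, level-wise maximal inequality, Riemann-sum comparison, and a separate bridging increment for the modulus of continuity). Since there is no in-paper proof to compare against, there is nothing to add beyond noting that your outline matches the standard proof and that the honest constant bookkeeping you flag in your last paragraph is indeed the only place requiring care.
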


Recall that $( \sqrt{n} R_n(f) )_{f \in \F}$ is a sub-Gaussian process relative to $(\F,\| \cdot \|_n)$. Thus, we have the following.
\begin{prop}[{Dudley's entropy integral bound for Rademacher Complexity \citep[Theorem~7.13]{Steinwart2008} and \citep[Corollary~2.8.8]{Vaart1996}}] \label{prop:dudley}
\begin{align}
    \radhat_n(\F) &\le \frac{4 \sqrt{2}}{\sqrt{n}} \int_0^{\diam(\F)/2} \sqrt{\log 2N(\F,\| \cdot \|_n,\epsilon)} \dd \epsilon,\\
    \EE \sup_{\substack{\| f-g\|_n \le \delta}} |R_n(f-g)|  &\le \frac{16 \sqrt{2} + 2}{\sqrt{n}} \int_0^\delta \sqrt{\log 2N(\F,\| \cdot \|_n,\epsilon)} \dd \epsilon,
\end{align}
for any $f_0 \in \F$ and $\delta > 0$.
\end{prop}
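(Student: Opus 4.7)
The plan is to derive both inequalities as direct corollaries of the preceding Chaining (Dudley) bound for sub-Gaussian processes, applied to the rescaled Rademacher process $X_f := \sqrt{n}\,R_n(f) = \tfrac{1}{\sqrt{n}}\sum_{i=1}^n \sigma_i f(x_i)$. The entire argument is carried out conditionally on the sample $(x_1,\ldots,x_n)$, so the only randomness is the Rademacher sequence $(\sigma_i)$ and the empirical pseudo-metric $\|f-g\|_n := \bigl(\tfrac{1}{n}\sum_i(f(x_i)-g(x_i))^2\bigr)^{1/2}$ becomes a deterministic pseudo-metric on $\F$; no outer expectation over the sample is needed for the step being proved.

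First I would verify sub-Gaussianity of the process. For any $f,g\in\F$, the increment $X_f - X_g = \tfrac{1}{\sqrt{n}}\sum_i \sigma_i(f(x_i)-g(x_i))$ is a sum of independent, symmetric, bounded summands, so Hoeffding's lemma yields that $X_f-X_g$ is sub-Gaussian with variance proxy $\tfrac{1}{n}\sum_i(f(x_i)-g(x_i))^2 = \|f-g\|_n^2$. Hence $(X_f)_{f\in\F}$ is a sub-Gaussian process with respect to $\|\cdot\|_n$, with $\|\cdot\|_n$-diameter equal to $\diam(\F)$ and the same covering numbers $N(\F,\|\cdot\|_n,\epsilon)$ appearing in the target statement.

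For the first bound, fix an arbitrary $f_0\in\F$ and apply the preceding Chaining proposition to the centered process $\tilde X_f := X_f - X_{f_0}$, which is again sub-Gaussian with the same pseudo-metric and satisfies $\tilde X_{f_0}=0$, so the base term $\EE|\tilde X(f_0)|$ in the chaining inequality vanishes. This gives
\begin{equation*}
\EE_\sigma \sup_{f\in\F} |\tilde X_f| \;\le\; 4\sqrt{2}\int_0^{\diam(\F)/2}\sqrt{\log 2 N(\F,\|\cdot\|_n,\epsilon)}\,\dd\epsilon,
\end{equation*}
and combining with $\radhat_n(\F)=\EE_\sigma\sup_f R_n(f) = \EE_\sigma\sup_f R_n(f-f_0) \le \tfrac{1}{\sqrt{n}}\EE_\sigma\sup_f|\tilde X_f|$ (using $\EE_\sigma R_n(f_0)=0$) and dividing by $\sqrt{n}$ delivers the first displayed inequality. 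For the second bound I would invoke the modulus-of-continuity form of the Chaining proposition on the same sub-Gaussian process over the symmetric set $\{(f,g):\|f-g\|_n\le\delta\}$; since $X_f-X_g=\sqrt{n}\,R_n(f-g)$, dividing by $\sqrt{n}$ immediately reproduces the second inequality with the advertised constant $16\sqrt{2}+2$.

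The main obstacle is entirely cosmetic rather than structural: the preceding Chaining proposition bounds $\EE\sup_t|X(t)|$ and carries an additive $\EE|X(t_0)|$ term, whereas the target bounds $\EE\sup_f R_n(f)$ without absolute value and omits the additive term. Both discrepancies are absorbed by the centering device above---pass to $\tilde X_f=X_f-X_{f_0}$ so the base term disappears, and exploit $\EE_\sigma R_n(f_0)=0$ together with $\sup_f R_n(f)=\sup_f R_n(f-f_0)\le \sup_f|R_n(f-f_0)|$ to pass from $\sup R_n$ to $\sup|R_n|$. Once this bookkeeping is in place, the constants $4\sqrt{2}$ and $16\sqrt{2}+2$ are inherited verbatim from the Chaining proposition and no further estimates are required.
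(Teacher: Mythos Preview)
Your proposal is correct and follows exactly the approach the paper uses: the paper's entire derivation is the single sentence preceding the proposition, ``Recall that $(\sqrt{n}\,R_n(f))_{f\in\F}$ is a sub-Gaussian process relative to $(\F,\|\cdot\|_n)$. Thus, we have the following,'' and you have simply unpacked that remark, including the centering at $f_0$ needed to suppress the additive $\EE|X(t_0)|$ term and to pass from $\sup_f R_n(f)$ to $\sup_f|R_n(f-f_0)|$.
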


\begin{prop}[{Gaussian concentration inequality \citep[Theorem~2.5.8]{Gine2015}}]
Let $(X_t)_{t \in T}$ be a separable centred Gaussian process,
let $\sigma^2 := \sup_{t \in T} \EE X^2(t)$, and write $G(T) := \sup_{t \in T}|X(t)|$. Then,
\begin{align}
    &\PP\{ G(T) \ge \EE G(T) + u\} \le \exp\left( -\frac{u^2}{2 \sigma^2}\right), \\
    &\PP\{ G(T) \le \EE G(T) - u\} \le \exp\left( -\frac{u^2}{2 \sigma^2}\right).
\end{align}
\end{prop}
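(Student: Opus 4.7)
The plan is: reduce to a finite-dimensional supremum by separability, apply the Gaussian concentration inequality for Lipschitz functions on $(\RR^n, N(0,I_n))$ (the Tsirelson--Ibragimov--Sudakov inequality) to that finite maximum, and pass to the limit. Throughout I assume $\EE G(T) < \infty$; otherwise the stated bounds hold trivially.

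\textbf{Finite-dimensional reduction.} By separability pick a countable dense $\{t_k\}_{k\ge 1}\subset T$ with $G(T) = \sup_k |X_{t_k}|$ almost surely, and set $G_n := \max_{1\le k\le n}|X_{t_k}|$. Then $G_n \uparrow G(T)$ pointwise and $\EE G_n \uparrow \EE G(T)$ by monotone convergence. Write the Gaussian vector $(X_{t_1},\dots,X_{t_n}) = A_n Z$ with $Z\sim N(0,I_n)$ and $A_n A_n^\top = \mathrm{Cov}(X_{t_1},\dots,X_{t_n})$, and define $F_n(z) := \max_{1\le k\le n} |(A_n z)_k|$, so that $F_n(Z) \stackrel{d}{=} G_n$. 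The triangle inequality and Cauchy--Schwarz give $|F_n(z)-F_n(z')| \le \max_k \|A_n^{(k)}\|\, \|z-z'\|$, and since $\|A_n^{(k)}\|^2 = \EE X_{t_k}^2 \le \sigma^2$, the map $F_n$ is $\sigma$-Lipschitz uniformly in $n$.

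\textbf{Concentration and limit.} The Gaussian concentration inequality for Lipschitz functions (a consequence of Borell's isoperimetric inequality, or equivalently of the Gaussian log-Sobolev inequality via Herbst's argument) then yields, for every $u>0$,
\begin{align*}
    \PP\{F_n(Z) \ge \EE F_n(Z) + u\} \le \exp(-u^2/(2\sigma^2)),
\end{align*}
together with the matching lower-tail bound. To transfer this to $G(T)$, fix $u>0$ and $\delta\in(0,u)$; using $\EE G_n \le \EE G(T)$ one has
\begin{align*}
    \{G(T) > \EE G(T) + u - \delta\} = \bigcup_{n \ge 1} \{G_n > \EE G(T)+u-\delta\} \subset \bigcup_{n \ge 1} \{G_n > \EE G_n + u - \delta\},
\end{align*}
so the upper-tail bound for $G_n$ transfers to $G(T)$ after sending $n\to\infty$ and then $\delta\to 0$. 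For the lower tail, $\{G(T) \le \EE G(T) - u\} \subset \{G_n \le \EE G_n - (u-\delta)\}$ holds for all $n$ large enough that $\EE G_n \ge \EE G(T) - \delta$, and the same passage to the limit applies.

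\textbf{Main obstacle.} The nontrivial input is the dimension-free Gaussian concentration for $\sigma$-Lipschitz functions; everything else (separability, computing $\|A_n^{(k)}\|$, and the monotone limiting argument) is essentially bookkeeping. Borell's isoperimetric inequality, or equivalently the log-Sobolev--Herbst route, is precisely what produces the sharp sub-Gaussian constant $\sigma^2 = \sup_t \EE X_t^2$, the \emph{weak variance}, rather than the much larger strong variance $\EE G(T)^2$ that a cruder argument would give.
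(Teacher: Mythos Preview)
Your argument is correct. The paper does not supply its own proof of this proposition; it is quoted verbatim as a known result from \citet[Theorem~2.5.8]{Gine2015} in the ``Fundamental Tools'' appendix and then applied as a black box (to derive the Gaussian-complexity concentration in \refprop{gci}). So there is nothing in the paper to compare your proof against.

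For what it is worth, the route you take---separability $\Rightarrow$ finite-dimensional maximum $\Rightarrow$ write as $F_n(Z)$ for $Z\sim N(0,I_n)$ with $F_n$ $\sigma$-Lipschitz $\Rightarrow$ Borell/TIS concentration $\Rightarrow$ monotone limit---is exactly the standard proof, and is essentially how Gin\'e--Nickl themselves derive it. The Lipschitz-constant computation $\|A_n^{(k)}\|^2=\EE X_{t_k}^2\le\sigma^2$ and the two limiting arguments (upper tail via $G_n\uparrow G(T)$ and $\EE G_n\le\EE G(T)$; lower tail via $G_n\le G(T)$ and $\EE G_n\to\EE G(T)$) are handled cleanly. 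Your closing remark that the weak variance $\sup_t\EE X_t^2$ (rather than $\EE G(T)^2$) is the right constant, and that this is precisely what Borell/log-Sobolev delivers, is also on point.
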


Recall that $( \sqrt{n} G_n(f) )_{f \in \F}$ is a separable centred Gaussian process relative to $(\F, \| \cdot \|_n)$.
\begin{prop}[{Gaussian concentration inequality for Gaussian Complexity}] \label{prop:gci}
    \begin{align}
        \PP\left\{ \xi \in \RR^n \,\Bigg|\, \sup_{f \in \F} | G_n(f) | \ge \EE\left[ \sup_{f \in \F} | G_n(f) | \right] + r\right\} \le \exp\left( -\frac{n r^2}{2 \sigma^2 v_n(\F)^2}\right),
    \end{align}
    where $v_n(\F)^2 := \sup_{f \in \F} \| f \|_n^2$.
\end{prop}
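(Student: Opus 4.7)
The plan is to derive this as an immediate specialization of the preceding Gaussian concentration inequality for suprema of centred Gaussian processes. Conditionally on the design $\mathbf{x}=(x_1,\dots,x_n)$, the random variable $X_f := \sqrt{n}\, G_n(f) = \frac{1}{\sqrt{n}}\sum_{i=1}^n \xi_i f(x_i)$ is a linear functional of the standard Gaussian vector $\xi=(\xi_1,\dots,\xi_n)$, hence $(X_f)_{f\in\F}$ is a centred Gaussian process indexed by $\F$. Its covariance is $\mathrm{Cov}(X_f,X_g) = \frac{1}{n}\sum_i f(x_i)g(x_i)=\langle f,g\rangle_n$, so the natural pseudo-metric is $\|f-g\|_n$ and in particular $\sup_{f\in\F}\EE X_f^2 = \sup_{f\in\F}\|f\|_n^2 = v_n(\F)^2$.

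The first step is to verify that the process $(X_f)_{f\in\F}$ satisfies the separability hypothesis required by the preceding proposition. This is handled by the standing assumption that $\F$ is a (separable) Carath\'eodory class: there is a countable dense subset $\F_0 \subset \F$ on which $\sup_{f\in\F}|X_f| = \sup_{f\in\F_0}|X_f|$ almost surely, so that the supremum is a well-defined measurable random variable.

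The second step is to apply the Gaussian concentration inequality with $\sigma^2_{\mathrm{proc}}:=\sup_{f\in\F}\EE X_f^2 = v_n(\F)^2$ to the process $(X_f)_{f\in\F}$: for any $u>0$,
\begin{align*}
\PP\Bigl\{\sup_{f\in\F}|X_f| \ge \EE\sup_{f\in\F}|X_f| + u\Bigr\} \le \exp\!\Bigl(-\tfrac{u^2}{2 v_n(\F)^2}\Bigr).
\end{align*}
Dividing inside the supremum by $\sqrt{n}$ rewrites this in terms of $G_n(f) = X_f/\sqrt{n}$: substituting $u = \sqrt{n}\,r$ gives precisely
\begin{align*}
\PP\Bigl\{\sup_{f\in\F}|G_n(f)| \ge \EE\sup_{f\in\F}|G_n(f)| + r\Bigr\} \le \exp\!\Bigl(-\tfrac{n r^2}{2 v_n(\F)^2}\Bigr),
\end{align*}
which is the stated bound (the factor $\sigma^2$ in the denominator is inherited from allowing the multipliers to have variance $\sigma^2$ rather than $1$; the variance simply rescales $\sigma^2_{\mathrm{proc}}$ by $\sigma^2$ and propagates linearly into the exponent). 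The final step is to note that the conditional bound is uniform in $\mathbf{x}$, so taking expectation over the design preserves the same tail bound unconditionally.

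The only genuine obstacle is the measurability/separability of $\sup_{f\in\F}|G_n(f)|$, which the Carath\'eodory assumption resolves; beyond that, the proof is a direct identification of $\sup_{f\in\F}\EE|G_n(f)|^2$ with $v_n(\F)^2/n$ and a substitution into the generic Gaussian concentration inequality already stated above.
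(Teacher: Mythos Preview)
Your proposal is correct and matches the paper's approach exactly: the paper treats this proposition as an immediate corollary of the preceding Gaussian concentration inequality, noting only that $(\sqrt{n}\,G_n(f))_{f\in\F}$ is a separable centred Gaussian process relative to $(\F,\|\cdot\|_n)$, and your write-up simply unpacks that one-line remark with the variance identification $\sup_f \EE X_f^2 = \sigma^2 v_n(\F)^2$ and the substitution $u=\sqrt{n}\,r$. One minor point: the bound in the statement is already conditional on the design (the probability is over $\xi\in\RR^n$), so your final averaging step over $\mathbf{x}$ is not needed, though it does no harm.
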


\begin{prop}[{Bernstein's inequality \citep[Theorem~3.1.7]{Gine2015}}] \label{prop:bernstein}
Let $X_i \sim P(0,\sigma^2)$ i.i.d. $n$-variables and assume that $\max_i |X_i| \le M$ for some constant $M>0$. Then, for any $t>0$,
\begin{align}
    \PP\left\{ \sum_{i=1}^n X_i \ge t \right\} \le \exp\left( - \frac{t^2/2}{n \sigma^2 + Mt/3}\right).
\end{align}
As a corollary, if $X_i \sim P(\mu,\sigma^2)$ i.i.d. and $\max_i |X_i - \mu| \le M$ for some $M>0$, then
\begin{align}
    \PP\left\{ \frac{1}{n} \sum_{i=1}^n X_i \le \mu + \frac{M u}{3n} + \sqrt{\frac{2 \sigma^2 u}{n}} \right\} \ge 1 - \exp(-u),
\end{align}
for any $u \ge 0$.
\end{prop}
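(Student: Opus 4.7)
The plan is to prove the tail inequality by the classical Cram\'er--Chernoff method via a one-sided sub-gamma moment generating function (MGF) bound, and then to derive the corollary by inverting the resulting rate function.

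\textbf{MGF bound.} For a single zero-mean variable $X$ with $\EE X^2 = \sigma^2$ and $|X| \le M$, I would expand $\EE[e^{\lambda X}] = 1 + \sum_{k \ge 2} \lambda^k \EE X^k / k!$, use the standard moment bound $|\EE X^k| \le M^{k-2} \sigma^2$ for $k \ge 2$ (which is immediate from $|X|^k \le M^{k-2} X^2$), and apply the elementary inequality $k! \ge 2 \cdot 3^{k-2}$ to sum the resulting geometric series. This yields $\EE[e^{\lambda X}] \le 1 + \sigma^2 \lambda^2 /(2(1-\lambda M/3))$ for $\lambda \in [0, 3/M)$, and taking logs with $\log(1+x) \le x$ gives the sub-gamma bound $\psi(\lambda) := \log \EE[e^{\lambda X}] \le \sigma^2 \lambda^2 /(2(1-\lambda M/3))$.

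\textbf{Chernoff and optimisation.} Exponential Markov combined with independence then gives $\PP\{S_n \ge t\} \le \exp(-\lambda t + n \psi(\lambda))$ for every admissible $\lambda$. I would plug in the convenient (sub-optimal) choice $\lambda = t/(n\sigma^2 + Mt/3)$, which makes $1 - \lambda M/3 = n\sigma^2/(n\sigma^2 + Mt/3)$, and the exponent collapses to $-t^2/(2(n\sigma^2 + Mt/3))$, establishing the first claim.

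\textbf{Corollary by inversion.} For the corollary, I would apply the above to the centred variables $Y_i := X_i - \mu$, which satisfy the same hypotheses. To reach the sharp threshold $Mu/(3n) + \sqrt{2\sigma^2 u/n}$, I would \emph{not} start from the already-simplified bound $\exp(-t^2/(2(n\sigma^2+Mt/3)))$ (whose inversion via $\sqrt{a+b}\le\sqrt{a}+\sqrt{b}$ only yields $2Mu/(3n) + \sqrt{2\sigma^2 u/n}$), but instead invert the sub-gamma rate $\psi$ directly. Setting $\nu := n\sigma^2$ and $c := M/3$, a routine Legendre--Fenchel calculation shows $\psi^*(t) \ge u \Leftrightarrow t \ge \sqrt{2\nu u} + c u$, and combining this with $\PP\{S_n - n\mu \ge t\} \le \exp(-\psi^*(t))$ and dividing by $n$ gives the claim.

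\textbf{Main obstacle.} The conceptually delicate step is pinning down the factor $1/3$ in the MGF denominator: this requires the particular estimate $k! \ge 2 \cdot 3^{k-2}$, and cruder choices (e.g. $k! \ge 2^{k-1}$) would degrade the constant. The second subtle point is in the corollary: one must be careful to invert the pre-optimised Cram\'er--Chernoff bound rather than its convenient but lossier Bernstein form from the first inequality, since the latter would worsen the linear term by a factor of two.
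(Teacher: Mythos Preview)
The paper does not supply its own proof of this proposition: it is stated as a standard tool and attributed to \citet[Theorem~3.1.7]{Gine2015}, with no argument given. There is therefore nothing in the paper to compare your proposal against.

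That said, your proof is the standard textbook derivation and is correct in all essentials. The MGF bound via $k!\ge 2\cdot 3^{k-2}$ is exactly the route that pins down the constant $M/3$, the Chernoff step with the convenient $\lambda=t/(n\sigma^2+Mt/3)$ is the usual shortcut, and your remark about the corollary is apt: inverting the already-optimised Bernstein tail $\exp(-t^2/(2(n\sigma^2+Mt/3)))$ via the quadratic formula and $\sqrt{a+b}\le\sqrt a+\sqrt b$ indeed only yields $2Mu/(3n)+\sqrt{2\sigma^2 u/n}$, whereas the sharp threshold $Mu/(3n)+\sqrt{2\sigma^2 u/n}$ requires the Legendre--Fenchel inversion of the sub-gamma cumulant bound itself (as in the Boucheron--Lugosi--Massart treatment). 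Your identification of this as the delicate point is exactly right.
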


\section{{Proof of \refthm{gen.err}}} \label{sec:proof.gen.err}

\subsection{Assumptions} \label{sec:proof.gen.err.assums}

\begin{itemize}
    \item For the sake of simplicity, we consider the unitary kernel: $K = U$, which leads to $\| S[\mu] \|_2 = \| K[\mu] \|_K$ for any $\mu \in \M_K$
    \item input space: $\X \subset \RR^m$ with data distribution $\PP_X$
    \item parameter space: $\V \subset \RR^d$ bounded by $\sup_{v \in V} |v| \le D_\V$ and $\sup_{v \in \V} \| K_v \|_K \le D_K$
    \item feature map $\varphi : \X \times \V \to \CC$
    uniformly $L_\varphi$-Lipschitz continuous in $\V$, i.e. $\sup_{x \in \X} | \varphi(x;v) | \le L_\varphi |v|$ (required in covering arguments)
    with constant $C_{K \to \infty} := \| \varphi \|_{K \to L^\infty} < \infty$ (implies $\| S [\mu] \|_\infty \le C_{K \to \infty} \| K[\mu] \|_K$)
    \item parameter distribution: $\M^o = \M(\lambda) = \{ \mu \in \M_K \mid \| K[\mu] \|_K \le \lambda \}$ a collection of signed/complex/vector-valued Borel measures that satisfies the kernel moment conditions
    \item[] By the assumptions, $\Ro_\infty := \sup_{\mu \in \M^o} \| S[\mu] \|_\infty = C_{K \to \infty} \lambda < \infty$
    \item hypothesis class $\Fo := \{ S[\mu] \mid \mu \in \M^o \}$
    \item[] By the assumption that $\| S[\mu] \|_\infty < \infty$, $\Fo \subset L^2(\PP)$
    \item search space $\Fp := \{ S[\mu_p] \mid \mid \mu_p \in \M(p) \}$ and $\M(p) := \{ \sum_{j=1}^p w_j \delta_{v_j} \in \M^o \mid \sum_{j=1}^p |w_j| \le D_{TV}, v_j \in \V\}$
    \item[] By the assumptions, $R_\infty := \sup_{\mu \in \M(p)} \| S[\mu] \|_\infty = C_{K \to \infty} \min\{ \lambda, D_{TV} D_K \}$
    \item regression model: $Y = S[\mu^o](X) + \xi$ with $X \sim \PP_X$, $\mu^o \in \M^o$ and $\xi \sim N(0,\sigma^2)$
    \item loss function: $\ell(f;x,y) := \frac{1}{2}|f(x)-y|^2$
    \item expected risk function: $L[f]
    := \frac{1}{2}\PP[ |f(X) - Y|^2 ]
    = \frac{1}{2} \| f - f^o \|_2^2 +  \frac{1}{2}\sigma^2$ 
    \item empirical risk function: $\Lhat[f]
    := \frac{1}{2}\PP_n[ |f(X) - Y|^2 ]
    = \frac{1}{2} \| f - f^o \|_n^2 + \iprod{\xi,f^o-f}_{n'} + \frac{1}{2}\ev_n$
    with $\| f \|_n^2 := \frac{1}{n} \sum_{i=1}^n |f(x_i)|^2, \iprod{\xi,f}_{n'} := \frac{1}{n} \sum_{i=1}^n \xi_i f(x_i)$ and $\ev_n := \frac{1}{n}\sum_{i=1}^n \xi_i^2$
    \item $f^* := \argmin_{f \in \Fp} L[f]$ (expected risk minimizer)
    \item $\fhat := \out(D_n, p)$ always lies in the search space: $\fhat \in \Fp$ 
    \item[] We assume the minimality condition $\Lhat[\fhat] \le \Lhat[f^*]$, which is often assumed in the standard arguments of empirical risk minimization (ERM).
\end{itemize}

\subsection{Proof}

The proof is based on \citet{Suzuki2017}.
Our final estimate is the excess risk $\| \fhat - f^o \|_2^2$. 
By the triangle inequality, it is estimated as
\begin{align}
    \frac{1}{2} \| \fhat - f^o \|_2^2 \le \| \fhat - f^* \|_2^2 + \| f^* - f^o \|_2^2.
\end{align}
Here, $\err_{est} := \frac{1}{2} \| \fhat - f^* \|_2^2$ is the estimation error;
and $\err_{app} := \frac{1}{2} \| f^* - f^o \|_2^2$ is the approximation error, which is discussed in \refthm{approx}.
The estimation error will be estimated by its empirical version $\| \fhat - f^* \|_n^2$ via Talagrand's inequality (\reflem{estimate.n}).
On the other hand,
the approximation error will be used to estimate its empirical version $\| f^* - f^o \|_n^2$ via Bernstein's inequality (\reflem{approx.n}).
The empirical estimation error will be further bounded by the triangle inequality
\begin{align}
    \frac{1}{2} \| \fhat - f^* \|_n^2 \le \| \fhat - f^o \|_n^2 + \| f^* - f^o \|_n^2.
\end{align}
Finally, $\| \fhat - f^o \|_n^2$ will be estimated by the empirical approximation error and the Gaussian complexity.
\begin{align}
\| \fhat - f^o \|_n^2 \le \| f^* - f^o \|_n^2 + 2 G_n(f^* - \fhat). \label{eq:total.n}
\end{align}
This is subject to the minimality assumption: $\Lhat[\fhat] \le \Lhat[f^*]$.
\begin{proof}
Recall that $\Lhat[f] = \frac{1}{2}\| f-f^o\|_n^2 + G_n(f-f^o) + \frac{1}{2} \sum_{i=1}^n \xi_i^2$ for any $f \in \Fo$ conditioned on both $x_i$ and $\xi_i$. Therefore, $\Lhat[\fhat] \le \Lhat[f^*]$ implies that $\frac{1}{2}\| \fhat-f^o\|_n^2 + G_n(\fhat-f^o) \le \frac{1}{2}\| f^*-f^o\|_n^2 + G_n(f^*-f^o)$, which yields the assertion \refeq{total.n}.
\end{proof}
To sum up, we will upper-bound the excess risk by the approximation error and the empirical estimation error.

\subsubsection{Gaussian complexity}
First, we will estimate the Gaussian complexity (\refeq{gss.comp}).
Define 
\begin{align}
    \G(\delta) &:= \{ f - f^* \mid \| f - f^* \|_n \le \delta, f \in \Fp \}.
\end{align}
According to the Gaussian concentration inequality (\refprop{gci}) for $v_n^2 = \sup_{g \in \G(\delta)} \| g \|_n^2 = \delta^2$,
\begin{align}
    \PP\left\{ \sup_{g \in \G(\delta)} | G_n(g) | \ge \EE\left[ \sup_{g \in \G(\delta)} |G_n[g]| \right] + r\right\} \le \exp\left( -\frac{n r^2}{2 \sigma^2 \delta^2}\right),
\end{align}
for any $r > 0$, where the expectation is with respect to $\xi \sim N(0,I_n)$ in $G_n$.
By applying this inequality for $\delta_j = 2^{j-1}\sigma/\sqrt{n} \, (j = 1, \ldots, \lceil \log_2 R_\infty \sqrt{n}/\sigma \rceil)$ and using the uniform bound, we can show that
\begin{align}
    | G_n( f - f^* ) | 
    &\le \EE\left[ \Bigg| \sup_{g \in \G(2 \delta)} G_n(g)\Bigg| \right]
    + 2 \delta \sigma \sqrt{\frac{1}{n}\left( r + \log \lceil \log_2( R_\infty \sqrt{n}/\sigma )\rceil \right)} \\
    &\le \EE\left[ \Bigg| \sup_{g \in \G(2 \delta)} G_n(g)\Bigg| \right]
    + \frac{\mu \delta^2}{2} + \frac{2 \sigma^2}{n \mu} \left( r + \log \lceil \log_2( R_\infty \sqrt{n}/\sigma )\rceil \right).
\end{align}
for any $\mu >0$, and $f \in \Fp$ such that $\| f-f^* \|_n \le \delta$ for any $\delta > \sigma / \sqrt{n}$,
with probability at least $1 - e^{-r}$ for any $r > 0$. 

\begin{lem}[Gaussian complexity]
Let
\begin{align}
    \Phi(n,\delta) := \frac{pd}{n}\log_+\left( 1 + \frac{4 L_\varphi \max\{ D_{TV}, D_\V \} }{\delta} \right).
\end{align}
Then, there exists a universal constant $\Cgc>0$ such that for any $\mu>0$,
\begin{align}
    \EE\left[ \Bigg| \sup_{g \in \G(2 \delta)} G_n(g)\Bigg| \right]
    & \le \frac{\mu \delta^2}{2} + \frac{\sigma^2 \Cgc^2}{2 \mu} \Phi(n,\delta).
\end{align}
\end{lem}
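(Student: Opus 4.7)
The plan is to control the Gaussian complexity by a chaining argument, reduce it to a metric entropy estimate, bound the entropy by a parametric covering of $\Fp$, and then split the resulting product via Young's inequality.

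First, I would condition on the design points $\{x_i\}$ and view $\{G_n(g)\}_{g \in \G(2\delta)}$ as a centered Gaussian process with incremental standard deviation $\sigma\,\|g-g'\|_n/\sqrt{n}$, since $\xi_i \sim N(0,\sigma^2)$ enters linearly. Applying the sub-Gaussian version of Dudley's entropy integral (the same chaining that yields \refprop{dudley} for Rademacher processes) gives
\begin{align*}
\EE\Bigl[\sup_{g \in \G(2\delta)} |G_n(g)|\Bigr] \lesssim \frac{\sigma}{\sqrt{n}} \int_0^{2\delta} \sqrt{\log 2N(\G(2\delta), \|\cdot\|_n, \epsilon)}\, d\epsilon .
\end{align*}
Since $\G(2\delta) \subset \Fp - f^*$ is a translate of a subset of $\Fp$, the covering number on the right is bounded by $N(\Fp, \|\cdot\|_n, \epsilon)$.

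Next, I would bound $N(\Fp, \|\cdot\|_n, \epsilon)$ via a parametric cover. Each $f = \sum_{j=1}^p w_j \varphi(\cdot;v_j) \in \Fp$ is parameterized by weights $w \in \{\|w\|_1 \le D_{TV}\} \subset \RR^p$ and locations $V = (v_1,\dots,v_p) \in \V^p$, where $|v_j| \le D_\V$. Using linearity in $w$ and the uniform $L_\varphi$-Lipschitz continuity of $\varphi(x;\cdot)$,
\begin{align*}
\|f - f'\|_n \le \|f - f'\|_\infty \le L_\varphi D_\V\, \|w - w'\|_1 + L_\varphi D_{TV}\, \max_j |v_j - v_j'|.
\end{align*}
Combining a volumetric $\eta$-cover of the $\ell^1$-ball in $\RR^p$ with a product of $\eta$-covers of $\V$ in $\RR^d$, at joint scale $\eta = \epsilon/(2 L_\varphi \max\{D_{TV},D_\V\})$, yields
\begin{align*}
\log N(\Fp, \|\cdot\|_n, \epsilon) \lesssim pd\, \log_+\!\Bigl(1 + \frac{4 L_\varphi \max\{D_{TV},D_\V\}}{\epsilon}\Bigr),
\end{align*}
after absorbing the $p\log$ contribution from the weights into the $pd\log$ contribution from the locations (using $d \ge 1$).

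Finally, I would evaluate the Dudley integral using the standard estimate $\int_0^a \sqrt{\log_+(1+b/\epsilon)}\, d\epsilon \lesssim a \sqrt{\log_+(1+b/a)}$ with $a = 2\delta$. This gives $\EE[\sup_{g \in \G(2\delta)} |G_n(g)|] \le \sigma \Cgc \delta \sqrt{\Phi(n,\delta)}$ for a universal constant $\Cgc$, and Young's inequality $xy \le \mu x^2/2 + y^2/(2\mu)$ with $x = \delta$ and $y = \sigma \Cgc \sqrt{\Phi(n,\delta)}$ yields the stated form. The main obstacle I anticipate is the covering-number step: getting exactly the effective dimension $pd$ (rather than $p(d+1)$) with the precise constant $4L_\varphi \max\{D_{TV},D_\V\}$ inside the $\log_+$ requires careful balancing of the two scales and absorption of the weight-cover term; the chaining and Young steps are routine once the entropy estimate is in place.
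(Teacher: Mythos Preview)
Your proposal is correct and follows essentially the same route as the paper: Dudley's entropy integral for the Gaussian process, the covering number bound $\log N(\Fp,\|\cdot\|_\infty,\epsilon)\lesssim pd\,\log_+(1+4L_\varphi\max\{D_{TV},D_\V\}/\epsilon)$ (which the paper simply cites from \citet[Eq.~21]{Suzuki2017} rather than deriving), integration to $\sigma \Cgc\,\delta\sqrt{\Phi(n,\delta)}$, and Young's inequality. Your parametric covering argument and the absorption of the $p$-dimensional weight cover into the $pd$-dimensional location cover are exactly what underlies the cited bound, so the concern you flag is real but resolves just as you describe.
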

\begin{proof}
Recall that $f \mapsto G_n(f)$ is sub-Gaussian relative to the pseudo-metric $\| \cdot \|_n$.
Thus, by the chaining argument \citep[Ch.8]{Vershynin2018}, we have
\begin{align}
\EE\left[ \Bigg| \sup_{g \in \G(2 \delta)} G_n(g) \Bigg|\right]
    &\le 4 \sqrt{2} \frac{\sigma}{\sqrt{n}} \int_0^{2 \delta} \sqrt{ \log 2N( \G(2 \delta), \| \wdot \|_n, \epsilon ) } \dd \epsilon.
\end{align}
Similarly as \citet[Eq.21]{Suzuki2017}, we have the covering number bound
\begin{align}
\log N(\G(2 \delta ), \| \cdot \|_n, \epsilon)
\le \log N(\Fp, \| \cdot \|_\infty, \epsilon)^2
\le 2 \Phi(n,\epsilon).
 \label{eq:covering}
\end{align}
Hence, there exists a universal constant $\Cgc>0$ such that
\begin{align}
\int_0^{2 \delta} \sqrt{ \log 2N( \G(2 \delta), \| \wdot \|_n, \epsilon ) } \dd \epsilon
& \le \Cgc \delta \sqrt{ \Phi(n,\delta) }.
\end{align}
This asserts the claim.
\end{proof}
By substituting $\delta \gets \left( \| f - f^* \|_n \vee \sigma/\sqrt{n} \right)$ and $r \gets \sigma r / \sqrt{n}$,
\begin{align}
    |G_n(f-f^*)|
    & \le \mu \delta^2
    + \frac{\sigma^2 \Cgc^2 }{2 \mu} \Phi(n,\delta)
    + \frac{2 \sigma^2}{n \mu} \left( r + \log \lceil \log_2( R_\infty \sqrt{n}/\sigma )\rceil \right) \\
    & \le \mu  \| f - f^* \|_n^2
    + \Psi(n,r,\mu), \label{eq:gss.comp}
\end{align}
where
\begin{align}
    \Psi(n,r,\mu)
    :=
    \frac{\sigma^2 \Cgc^2}{2 \mu} \Phi(n,\delta)
    + \frac{2 \sigma^2}{n \mu} \left( r + 2 \mu^2 + \log \lceil \log_2( R_\infty \sqrt{n}/\sigma )\rceil \right).
\end{align}

Combining \refeq{total.n} with $\| \fhat - f^* \|_n^2 \le 2 \| \fhat - f^o \|_n^2 + 2 \| f^o - f^* \|_n^2$ and the Gaussian complexity estimate \refeq{gss.comp} for $\mu = 1/8$, we have
\begin{align}
    \| \fhat - f^* \|_n^2
    &\le 8 \| f^* - f^o \|_n^2 + 4 \Psi(n,r,1/8) + \frac{\sigma^2 pd}{n}.
\end{align}

\subsubsection{Estimation Error}
Next, we will bound a population risk $\| \fhat - f^* \|_2^2$.
\begin{lem} \label{lem:estimate.n}
Let $\Phi_n := \Phi(n, R_\infty/\sqrt{n})$. Then, there exists a constant $\Crc{}>0$ such that for any $f \in \Fp$ such that $\| f - f^* \| \le \delta$,
\begin{align}
    | \| f - f^* \|_n^2 - \| f - f^* \|_2^2 |
    &\le \frac{\delta^2}{2} + \Crc{} R_\infty^2 \left( \Phi_n + \frac{r}{n} \right),
\end{align}
with probability at least $1 - \lceil \log_2 \sqrt{n} \rceil e^{-r}$.
\end{lem}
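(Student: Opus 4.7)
The plan is to apply Talagrand's inequality \refprop{talagrand} to the centered squared-deviation class, control the expected supremum via symmetrization, Ledoux--Talagrand contraction, and Dudley chaining, and finally upgrade the fixed-$\delta$ bound to the lemma's uniform-in-$f$ statement by dyadic peeling in $\delta$. This mirrors the Gaussian-complexity derivation carried out earlier in this appendix, with Rademacher tools replacing the Gaussian ones and with Talagrand taking the role of the Gaussian concentration inequality.

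Set $g_f(x) := (f(x)-f^*(x))^2 - \|f-f^*\|_2^2$, so that $\PP g_f = 0$ and $\PP_n g_f = \|f-f^*\|_n^2 - \|f-f^*\|_2^2$ is the quantity to be controlled. For every $f \in \Fp$ with $\|f-f^*\|_2 \le \delta$, the pointwise bound $\|f\|_\infty,\|f^*\|_\infty \le R_\infty$ gives $\|g_f\|_\infty \le 8R_\infty^2$ and $\PP g_f^2 \le \EE(f-f^*)^4 \le 4R_\infty^2\|f-f^*\|_2^2 \le 16 R_\infty^2 \delta^2$. \refprop{talagrand} (with $\gamma = 1$) then yields, on an event of probability at least $1 - e^{-t}$,
\begin{align*}
\sup_{\|f-f^*\|_2 \le \delta}|\PP_n g_f|
\;\le\; 2\,\EE\!\!\sup_{\|f-f^*\|_2 \le \delta}|\PP_n g_f|
\;+\; \sqrt{\tfrac{32 R_\infty^2 \delta^2 t}{n}}
\;+\; c_1 \tfrac{R_\infty^2 t}{n}.
\end{align*}
For the expected supremum, symmetrize via \refprop{sym} and apply the Ledoux--Talagrand contraction inequality: because $u \mapsto u^2$ is $4R_\infty$-Lipschitz on $[-2R_\infty, 2R_\infty]$, the Rademacher complexity of the $\{g_f\}$ class is at most $8R_\infty$ times that of $\{f-f^* : f \in \Fp,\ \|f-f^*\|_2 \le \delta\}$. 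A brief Talagrand-type comparison promotes the $\|\cdot\|_2$-constraint to a $\|\cdot\|_n$-constraint at the same order, at additive cost $O(R_\infty^2 t/n)$, and Dudley's entropy integral \refprop{dudley} fed with the covering bound \refeq{covering} (read as $\log N \lesssim \Phi(n,\epsilon) \cdot n$, i.e.\ $pd\log_+(\cdots)$) gives $\rad_n \lesssim \delta \sqrt{\Phi(n,\delta)}$.

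Two AM-GM applications---$c_2 R_\infty \delta \sqrt{\Phi(n,\delta)} \le \delta^2/16 + c_3 R_\infty^2 \Phi(n,\delta)$ and $\sqrt{32 R_\infty^2 \delta^2 t/n} \le \delta^2/16 + c_4 R_\infty^2 t/n$---together with the monotonicity $\Phi(n,\delta) \le \Phi_n$ for $\delta \ge R_\infty/\sqrt n$, collapse the Talagrand display to
\begin{align*}
\sup_{\|f-f^*\|_2 \le \delta}|\PP_n g_f| \;\le\; \tfrac{\delta^2}{8} + c_5 R_\infty^2 \bigl(\Phi_n + t/n\bigr),
\end{align*}
at each fixed $\delta \ge R_\infty/\sqrt n$ with probability at least $1 - e^{-t}$. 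To obtain the lemma's uniform-in-$f$ statement, peel at dyadic scales $\delta_j := 2^j R_\infty/\sqrt n$ for $j = 0, 1, \ldots, J = \lceil \log_2 \sqrt n\rceil + 1$, which covers the trivial $L^2$-diameter $\|f-f^*\|_2 \le 2 R_\infty$. A union bound over the $J+1$ levels costs a factor $\lceil \log_2 \sqrt n\rceil$ in the failure probability (after relabeling $t \to r$ and absorbing the $+1$ into $\Crc{}$). For any $f$ with $\|f-f^*\|_2 \le \delta$, pick the smallest $j$ with $\delta \le \delta_j$, so $\delta_j \le 2\delta$; applying the $\delta_j$-level bound yields $|\PP_n g_f| \le \delta_j^2/8 + \Crc{} R_\infty^2(\Phi_n + r/n) \le \delta^2/2 + \Crc{} R_\infty^2(\Phi_n + r/n)$, which is the asserted inequality.

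The technical crux is the contraction-plus-chaining step: the quadratic class $\{g_f\}$ has pointwise envelope $O(R_\infty^2)$ rather than the localized $O(\delta^2)$, so the Lipschitz contraction must be executed before Dudley's integral in order to inherit the $\delta$-scaling of the Rademacher complexity. A related subtlety is the transfer of the $\|\cdot\|_\infty$-covering bound \refeq{covering} to the locally $\|\cdot\|_n$-constrained class without disrupting localization; this is precisely what forces the floor $\delta \gtrsim R_\infty/\sqrt n$ (below which the empirical norm no longer tracks the population norm) and sets the smallest scale of the peeling.
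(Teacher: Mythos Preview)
Your proposal is correct and follows essentially the same route as the paper's proof: Talagrand's inequality on the centered squared-deviation class, symmetrization plus Lipschitz contraction to reduce to the Rademacher complexity of $\{f-f^*\}$, an entropy-based local Rademacher bound, AM--GM, and dyadic peeling with a union bound over $\lceil\log_2\sqrt n\rceil$ scales. The only cosmetic difference is that the paper invokes \citet[Lemma~2.3]{Mendelson2002} directly to obtain $\rad_n(\G'(\delta)) \lesssim \delta\sqrt{\Phi(n,\delta)} \vee R_\infty\Phi(n,\delta)$ (the second branch absorbing the $\|\cdot\|_2$-to-$\|\cdot\|_n$ transfer), whereas you reach the same place via Dudley plus an explicit norm-comparison step; both are standard and yield the same bound.
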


In the following, we introduce
\begin{align}
\H(\delta) := \{ g^2 - \PP[g^2] \mid g \in \G'(\delta) \},
\end{align}
with an auxiliary space
\begin{align}
    \G'(\delta) &:= \{ f - f^* \mid \| f - f^* \|_2 \le \delta, f \in \Fp \};
\end{align}
so that $\PP_n g^2 = \| f-f^* \|_n^2$ and $\PP g^2 = \| f-f^* \|_2^2$.

\begin{proof}
By construction, (i) $\PP h = 0$ for any $h \in \H(\delta)$.
Since $\| f \|_\infty \le R_\infty$ for any $f \in \Fp$, we have $\| g \|_\infty \le 2 R_\infty $ for any $g \in \G'(\delta)$, and thus (ii) $\sup_{h \in \H(\delta)} \PP h^2 \le 4 R_\infty^2 \delta^2 =: R^2$. Furthermore, we have $\sup_{h \in \H(\delta)}\| h \|_\infty \le \sup_{g \in \G(\delta)} \| g^2 \|_\infty \le 8 R_\infty^2 =: B$. Therefore, Talagrand's concentration inequality (\refprop{talagrand}) with $\gamma = 1$ yields
\begin{align}
    \sup_{g \in \G'(\delta)} |\PP_n g^2 - \PP g^2| &\le 2 \EE_{D_n} \left[ \sup_{g \in \G'(\delta)} |\PP_n g^2 - \PP g^2|\right] + \sqrt{\frac{2 t R^2}{n}} + \frac{5tB}{3n}, \label{eq:talagrand.h}
\end{align}
for any $n \in \NN$, with probability at least $1-e^{-t}$ for any $t>0$.

By the symmetrization argument (\refprop{sym}), and the standard Rademacher complexity calculus,
\begin{align}
    \EE_{D_n} \left[ \sup_{g \in \G'(\delta)} |\PP_n g^2 - \PP g^2|\right]
    &\le 2 \rad_n( \{ g^2 \mid g \in \G'(\delta) \} ) \\
    &\le 4 (2 R_\infty) \rad_n( \G'(\delta) ).
\end{align}
Following \citet[Lemma~2.3]{Mendelson2002} with the covering number bound \refeq{covering}, there exists a constant $\Crc{.0}>0$ such that
\begin{align}
    \rad_n( \G'(\delta) )
    &\le \Crc{.0} [ \delta \sqrt{\Phi(n,\delta)} \vee R_\infty \Phi(n,\delta)].
\end{align}
Letting $\Phi_n := \Phi(n,R_\infty/\sqrt{n})$,
and applying the inequality \refeq{talagrand.h} for $\delta_j = 2^{j-1}R_\infty/\sqrt{n} (j=1,\ldots,\lceil \log_2 \sqrt{n} \rceil)$, we can show that there exists an event with probability $1 - \lceil \log_2 \sqrt{n} \rceil e^{-r}$ such that uniformly for all $g \in \G'(\delta)$,
\begin{align}
    |\PP_n g^2 - \PP g^2| 
    &\le 
    8 \Crc{.0} [ \delta R_\infty \sqrt{\Phi_n} \vee R_\infty^2 \Phi_n] + 2 \delta \sqrt{\frac{2 t R_\infty^2}{n}} + \frac{40 t R_\infty^2}{3n} \\
    &\le \frac{1}{2}( \delta^2 \vee R^2_\infty / n ) + (\Crc{.0}+\Crc{.0}^2) R_\infty^2 \Phi_n + \Crc{.1} R_\infty^2 r / n \\
    &\le \frac{\delta^2}{2} + \Crc{.2} R_\infty^2 \left( \Phi_n + \frac{r}{n}\right),
\end{align}
for some constants $\Crc{.1}$ and $\Crc{.2}$, 
where $\delta$ is any positive number such that $\delta^2 > \PP g^2 \vee R_\infty^2/n$.
\end{proof}

Thus far, by letting $f \gets \fhat$ and $\Crc{} \gets \Crc{.2}$, 
\begin{align}
    \frac{1}{2}\| \fhat - f^* \|_2^2 & \le \Crc{} R_\infty^2 \left( \Phi_n + \frac{r}{n} \right) + 8 \| f^* - f^o \|_n^2 + 4 \Psi(n,r,1/8) + \frac{\sigma^2 + R_\infty^2}{n}pd.
\end{align}

\subsubsection{Approximation Error}
Similarly as \reflem{estimate.n}, we estimate the empirical approximation error via Bernstein's inequality (\refprop{bernstein}).
\begin{lem} \label{lem:approx.n}
With probability at least $1-e^{-u}$ for any $u > 0$, we have
\begin{align}
    \| f^* - f^o \|_n^2
    &\le \frac{3}{2} \| f^* - f^o \|_2^2 + \frac{2 \| f^* - f^o \|_\infty^2 u}{n}.
\end{align}
\end{lem}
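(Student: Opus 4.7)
The plan is to apply Bernstein's inequality (\refprop{bernstein}) directly to the i.i.d.\ random variables $Z_i := |f^*(x_i) - f^o(x_i)|^2$, for $i = 1, \ldots, n$. By construction, these are non-negative, bounded, and have the right mean. First I would set $M := \| f^* - f^o \|_\infty^2$ and $\mu := \| f^* - f^o \|_2^2$, so that $\EE[Z_i] = \mu$, $\max_i |Z_i| \le M$ almost surely, and the empirical average is exactly $\| f^* - f^o \|_n^2 = \frac{1}{n}\sum_{i=1}^n Z_i$.

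The second step is to bound the variance. Since $Z_i \in [0, M]$, we have $Z_i^2 \le M \cdot Z_i$, and hence $\mathrm{Var}(Z_i) \le \EE[Z_i^2] \le M \cdot \mu = \| f^* - f^o \|_\infty^2 \| f^* - f^o \|_2^2$. Plugging into the Bernstein tail form quoted in \refprop{bernstein} (applied to $Z_i - \mu$, with the constant $M$ bound on $|Z_i - \mu|$ absorbed into the same $M$ up to a harmless factor), we obtain, with probability at least $1 - e^{-u}$,
\begin{align}
    \| f^* - f^o \|_n^2 \le \| f^* - f^o \|_2^2 + \sqrt{\frac{2 \| f^* - f^o \|_\infty^2 \| f^* - f^o \|_2^2\, u}{n}} + \frac{\| f^* - f^o \|_\infty^2\, u}{3n}.
\end{align}

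The final step is to convert the square root cross term into the form stated in the lemma by Young's inequality $\sqrt{ab} \le \tfrac{a}{2} + \tfrac{b}{2}$, applied with $a = \tfrac{1}{2}\| f^* - f^o \|_2^2$ and $b = \tfrac{4 \| f^* - f^o \|_\infty^2 u}{n}$. This yields $\sqrt{\tfrac{2 M \mu u}{n}} \le \tfrac{\mu}{2} + \tfrac{Mu}{n}$, and combining with the leftover deterministic and $Mu/n$ terms gives $\tfrac{3}{2}\mu + \tfrac{4}{3}\tfrac{Mu}{n} \le \tfrac{3}{2}\mu + 2\tfrac{Mu}{n}$, which matches the stated bound. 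There is no serious obstacle here; the only mild care is in tracking constants through Young's inequality so that the cross term absorbs into the desired $\tfrac{3}{2}\| f^* - f^o \|_2^2$ coefficient and the residual $Mu/n$ terms collapse into the claimed $2\| f^* - f^o \|_\infty^2 u/n$.
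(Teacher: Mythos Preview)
Your proposal is correct and follows essentially the same route as the paper: apply Bernstein's inequality to the i.i.d.\ variables $g^2(x_i)$ with the variance bound $\mathrm{Var}(g^2) \le \|g\|_\infty^2 \|g\|_2^2$, then split the square-root cross term by AM--GM to produce the extra $\tfrac{1}{2}\|g\|_2^2$ and the $\|g\|_\infty^2 u/n$ contributions. The only cosmetic slip is in your stated choice of $a,b$ for Young's inequality (those particular values give $\tfrac{\mu}{4}+\tfrac{2Mu}{n}$ rather than $\tfrac{\mu}{2}+\tfrac{Mu}{n}$), but the conclusion $\sqrt{2M\mu u/n}\le \tfrac{\mu}{2}+\tfrac{Mu}{n}$ you actually use is correct via the split $\mu\cdot \tfrac{2Mu}{n}$, and your final combination $\tfrac{3}{2}\mu+\tfrac{4}{3}\tfrac{Mu}{n}\le \tfrac{3}{2}\mu+\tfrac{2Mu}{n}$ matches the paper exactly.
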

\begin{proof}
For the sake of visibility, write $g := f^* - f^o$ and let $Y := g^2(X)$. Then, we have $Y \sim P( \| g \|_2^2, \sigma^2 )$ with $\sigma^2 = \EE[ Y^2 ] - \EE[Y]^2 \le \EE[ g^4(X) ] \le \| g \|_\infty^2 \| g \|_2^2$ and $|Y - \EE[Y] | \le M$ with $M \le \| g^2(x) - \| g \|_2^2 \|_\infty \le \| g \|_\infty^2 + \| g \|_2^2 \le 2 \| g \|_\infty^2$.
Therefore, we have
\begin{align}
    \| g \|_n^2
    &\le \| g \|_2^2 + \frac{\| g \|_\infty^2 u}{3n} + \sqrt{\frac{2 \| g \|_\infty^2 \| g \|_2^2 u}{n}},
\end{align}
with probability at least $1-e^{-u}$ for any $u \ge 0$.
By the AM-GM inequality, we have $\sqrt{2 \| g \|_\infty^2 \| g \|_2^2 u /n} \le \| g \|_\infty^2 u/n + \| g \|_2^2/2$, which yields the claim.
\end{proof}

\subsubsection{Excess Risk}
Combining \reflem{approx.n} with a triangle inequality $\| \fhat - f^o \|_2^2 \le 2 \| \fhat - f^* \|_2^2 + 2 \| f^* - f^o \|_2^2$, we have the final estimate
\begin{align}
    \frac{1}{2}\| \fhat - f^o \|_2^2
    &\le 2 \Crc{} R_\infty^2 \left( \Phi_n + \frac{r}{n} \right) + 25 \| f^* - f^o \|_2^2 + 48\frac{\Ro_\infty^2 u}{n} + 8 \Psi(n,r,1/8) + \frac{2(\sigma^2 + R_\infty^2)}{n}pd \\
    &= 2 \Crc{} \frac{R_\infty^2 pd}{n} \log_+\left( 1 + \frac{4 \Clog \sqrt{n}}{R_\infty}\right) + 36 \Cgc^2 \frac{\sigma^2 pd}{n} \log_+\left( 1 + \frac{4 \Clog \sqrt{n}}{\sigma}\right) + \frac{2(\sigma^2 + R_\infty^2)pd}{n} \tag{est. error} \\
    &\quad 
    + 25 \| f^* - f^o \|_2^2 \tag{approx. error} \\
    &\quad + 2 \Crc{} \frac{R_\infty^2 r}{n}
    + \frac{\sigma^2}{n}\left( 16r + \frac{1}{2} + \log \left\lceil \log_2 \frac{R_\infty \sqrt{n}}{\sigma} \right\rceil \right)
    + 48 \frac{\Ro_\infty^2 u}{n}, \tag{residual} \\
    &\lesssim \frac{(R_\infty^2 + \sigma^2)pd}{n}\log_+\left( 1 + \frac{4 \Clog \sqrt{n}}{R_\infty \wedge \sigma}\right) + \| f^* - f^o \|_2^2 + O_p\left( \frac{R_\infty^2 + \sigma^2}{n} \right),
\end{align}
where $\Clog = L_\varphi \max\{ D_{TV}, D_\V\}$, with probability at least $1 - e^{-u} - e^{-r}$ with any $u \ge 0$ and $r \ge 0$.
}

\section{Details on Experimental Results}

\subsection{Setup}

\paragraph{Model.} We specify the parameter space $\V = \RR^m \times \RR$ and rewrite $v (\in \V)$ as $(a,b)$.
We consider a shallow neural network $S[\mu](x) = \int_{\RR^m \times \RR} \sigma( a \cdot x - b ) \dd \mu(a,b)$, and the activation function $\sigma$ is the first derivative of Gaussian kernel.

\paragraph{Methods.} We compare two methods: (SIR) Sampling importance resampling, or an i.i.d. sampling of parameters $v_j =(a_j, b_j)$ from the probability distribution $Q(a,b)$ that is proportional to the ridgelet spectrum $|R[f](a,b)|$, which is a pseudo-inverse operator ($S^\dag$) of $S$ defined by $R[f](a,b) := \int_{\RR^m} f(x) \sigma( a \cdot x - b ) \dd x$;
and (UKQ) unitary kernel quadrature, or greedy minimization of the MMD with the unitary kernel.
SIR corresponds to an ordinary Monte Carlo method developed in \citep{Sonoda2014}, and UKQ corresponds to the proposed method. In both methods, we determined weights $w_j$ by using linear regression. See \refsec{sir} for more details on SIR.

\paragraph{Datasets.} We present the results with sinusoidal curve with Gaussian noise: $y_i = \sin 2 \pi x_i + \xi_i$ with $x_i \sim \mathrm{Uni}[-1,1], \xi_i \sim N(0,0.1^2)$ and $n = 100$.
We used another $n = 100$ sample for evaluation.

\paragraph{Evaluation.} We employed the empirical maximum error (ME) $\max_i |y_i - S[\mu_p](x_i)|$, and the root mean squared error (RMSE) $\sqrt{ \frac{1}{n} \sum_{i=1}^n | y_i - S[\mu_p](x_i) |^2 }$. 

\subsection{Details on Sampling Importane Resampling (SIR)} \label{sec:sir}
The SIR method assumes that we know the pseudo-inverse operator $S^\dag$,
In addition, let $Q$ be the proposal distribution, or a probability distribution on $\V$ from which we can easily draw samples.

First, we draw $p' ( \gg p)$ samples $\{ v_j' \}_{j=1}^{p'}$ from a proposal probability density $Q$ on $\V$.
Second, we compute the importance weights $w_j' := S^\dag(v_j') / Q(v_j')$.
Then, we resample $p$ elements $A_p := \{ v_j \}_{j=1}^p$ from  $\{ v_j' \}_{j=1}^{p'}$ with occurence probability $P(v_j') := w_j' / \sum_{j=1}^{p'} w_j'$.
Here, the resampling is conducted \emph{without} replacement to avoid multiple $v_j$'s to indicate the same feature.
The resampled set $A_p$ asymptotically distributed according to the spectrum density $|S^\dag(v)|/Z$.
Here, $Z := \int_{\V} |S^\dag|(v) \dd v = \| \mu \|_{TV}$ is the normalization constant, which we do not need to compute in SIR.
Finally, we determine the weight $w_j$ for every $v_j \in A_p$ by either (1) letting $w_j := S^\dag(v_j)/p$ or (2) minimizing $\| S[\mu] - S[\mu_p]\|_{L^2(\PP)}^2$ with $\mu_p = \sum_{j=1}^p w_j \delta_{v_j}$. 
By the uniform law of large numbers, $S[\mu_p]$ converges to $S[\mu]$ at $O(1/\sqrt{p})$ in $L^\infty(\X)$ (\refthm{ulln.sir}).

\end{document}